
\documentclass{article}

\usepackage{microtype}
\usepackage{graphicx}
\usepackage{subfigure}
\usepackage{booktabs} 
\usepackage{makecell}
\usepackage[utf8]{inputenc} 
\usepackage[T1]{fontenc}    
\usepackage{url}            
\usepackage{booktabs}       
\usepackage{amsfonts}       
\usepackage{nicefrac}       
\usepackage{microtype}      
\usepackage{xcolor}         
\usepackage{subfiles}
\usepackage{caption}
\newcommand{\mra}{$M$-$R_s$ accessible }
\newcommand{\algo}{ASOR }
\newcommand{\algoe}{ASOR}
\usepackage{xr-hyper}
\usepackage{graphicx}
\usepackage{wrapfig}
\usepackage{tikz}
\usetikzlibrary{bayesnet}
\usepackage{hyperref}
\usepackage{enumitem}
\usepackage{xcolor}
\usepackage[noend]{algorithmic}

\usepackage{hyperref}



\usepackage[accepted]{icml2025}

\usepackage{amsmath}
\usepackage{amssymb}
\usepackage{mathtools}
\usepackage{amsthm}

\DeclareMathOperator*{\argmax}{arg\,max}

\usepackage[capitalize,noabbrev]{cleveref}

\theoremstyle{plain}
\newtheorem{theorem}{Theorem}[section]
\newtheorem{proposition}[theorem]{Proposition}
\newtheorem{lemma}[theorem]{Lemma}

\theoremstyle{definition}
\newtheorem{definition}[theorem]{Definition}

\theoremstyle{remark}

\usepackage[textsize=tiny]{todonotes}

\newcommand{\mytitle}{Policy Regularization on Globally Accessible States\\ in Cross-Dynamics Reinforcement Learning}
\icmltitlerunning{Policy Regularization on Globally Accessible States in Cross-Dynamics Reinforcement Learning}

\begin{document}

\twocolumn[
\icmltitle{\mytitle}



\icmlsetsymbol{equal}{*}

\begin{icmlauthorlist}
\icmlauthor{Zhenghai Xue}{ntu}
\icmlauthor{Lang Feng}{ntu}
\icmlauthor{Jiacheng Xu}{ntu,skywork}
\icmlauthor{Kang Kang}{skywork}
\icmlauthor{Xiang Wen}{skywork}
\icmlauthor{Bo An}{ntu,skywork}
\icmlauthor{Shuicheng YAN}{skywork,nus}
\end{icmlauthorlist}

\icmlaffiliation{ntu}{Nanyang Technological University, Singapore}
\icmlaffiliation{skywork}{Skywork AI}
\icmlaffiliation{nus}{National University of Singapore}

\icmlcorrespondingauthor{Bo An}{boan@ntu.edu.sg}

\icmlkeywords{Machine Learning, ICML}

\vskip 0.3in
]



\printAffiliationsAndNotice{}  

\begin{abstract}
To learn from data collected in diverse dynamics, Imitation from Observation (IfO) methods leverage expert state trajectories based on the premise that recovering expert state distributions in other dynamics facilitates policy learning in the current one. However, Imitation Learning inherently imposes a performance upper bound of learned policies. Additionally, as the environment dynamics change, certain expert states may become inaccessible, rendering their distributions less valuable for imitation.  To address this, 
we propose a novel framework that integrates reward maximization with IfO, employing $\mathcal F$-distance regularized policy optimization. This framework enforces constraints on globally accessible states—those with nonzero visitation frequency across all considered dynamics—mitigating the challenge posed by inaccessible states. By instantiating $\mathcal F$-distance in different ways, we derive two theoretical analysis and develop a practical algorithm called \textbf{A}ccessible \textbf{S}tate \textbf{O}riented Policy \textbf{R}egularization (ASOR).
ASOR serves as a general add-on module that can be incorporated into various RL approaches, including offline RL and off-policy RL.
Extensive experiments across multiple benchmarks demonstrate ASOR’s effectiveness in enhancing state-of-the-art cross-domain policy transfer algorithms, significantly improving their performance.
\end{abstract}

\section{Introduction}
Imitation Learning (IL) and Reinforcement Learning (RL)~\citep{sutton1998reinforce} facilitates large-scale policy optimization using datasets that are either static~\citep{wu2019behavior,fujomoto2019off} or continuously updated during training~\citep{timothy2016continuous,haarnoja2018soft}.  When applied to non-stationary environments with evolving dynamics, obtaining effective policies necessitates extensive training datasets with sufficient dynamics coverage~\citep{liu2022dara,li2023metadrive}. This challenge arises because optimal policies and state-action distributions are inherently dynamics-specific and do not generalize well. As a result, trajectory data collected under one dynamics cannot be directly utilized for policy learning in another.

To address inefficient data exploitation, IfO algorithms~\citep{wu2019behavior,faraz2018generative,jiang2020offline} propose a dynamics-agnostic approach that enables learning from data with varying dynamics. It imitates stationary state distributions of expert policies and relies on the idea that expert state distributions are similar across different dynamics~\citep{gangwni2020state,desai2020imitation,ilija2021state}. 
Building on this idea, \citet{xue2023state} further shows that the learning policy can achieve strong cross-dynamics performance if it recovers the expert state distribution in at least one of the dynamics.
Instead, we highlight a key limitation of this idea: the similarity of state distributions does not always hold, particularly in tasks where state accessibility varies with environmental dynamics. In such cases, certain expert states become inaccessible as the environment changes. For example, an autonomous vehicle may safely navigate intersections at high speed under low traffic densities, but will face a high risk of collisions in dense traffic. Consequently, states representing ``safe driving at high speed'' become inaccessible in certain dynamics, leading to distinct stationary state distributions. In such scenarios, expert trajectories with dynamics shift can be misleading.

To deal with the issue of distinct state accessibility, expert states that are not visited in some dynamics should be excluded during training. We define \textit{globally accessible states} which maintains the same accessibility across different dynamics. 
By restricting imitation to these states, the policy can still leverage expert state trajectories from various dynamics while avoiding misleading information from inaccessible states.
Meanwhile, IL cannot be naturally integrated with datasets containing reward signals and requires demonstrations to be optimal. To overcome this limitation, we propose a policy regularization method that incorporates imitation as constraint while optimizing the reward maximization objective in RL. Specifically, the constraint ensures that the $\mathcal F$-distance~\cite{sanjeev2017generalization} between the accessible state distributions of the current and expert policies remains upper-bounded. This approach offers two key advantages: By instantiating the $\mathcal F$-distance with JS divergence and network distance, we establish lower-bound performance guarantees for policy regularization under dynamics shift; By instantiating the $\mathcal F$-distance with a GAN-like distance measure, we transform policy regularization into a practical reward augmentation algorithm which can be a general add-on module to existing cross-dynamics RL algorithms~\citep{chen2021offline,luo2022adapt}.

In empirical evaluations, we access the proposed algorithm across various environments, including Minigrid~\citep{MinigridMiniworld23}, the simulated robotics environment MuJoCo~\citep{todorov2012mujoco}, the simulated autonomous-driving environment MetaDrive~\citep{li2023metadrive}, and a large-scale fall guys-like game environment. The proposed algorithm exhibits superior performance when integrated with multiple state-of-the-art algorithms. 
Our contributions can be summarized as follows: 
1) We identify a common limitation of existing IfO methods under dynamics shift and propose state distribution imitation restricted to globally accessible states; 2) We design an $\mathcal F$-distance regularized policy optimization framework that combines expert imitation with reward maximization; 3) By instantiating the $\mathcal F$-distance in different ways, we conduct theoretical analyses and introduce a practical algorithm, both validating the effectiveness of policy regularization on globally accessible states.
\vspace{-0.3cm}
\section{Backgroud}
\vspace{-0.1cm}
\subsection{Preliminaries}
\vspace{-0.1cm}
\label{sec:pre}
To model a set of decision-making tasks with different environment dynamics, we consider the Hidden Parameter Markov Decision Process~(HiP-MDP)~\citep{velez2016hidden} defined by a tuple $(\mathcal{S}, \mathcal{A}, \Theta, T, r, \gamma, \rho_0)$,
where $\mathcal{S}$ is the state space and $\mathcal{A}$ is the bounded action space with 
actions $a\in[-1,1]$. $\Theta$ is the space of hidden parameters.
$T_\theta(s'|s,a)$ is the transition function conditioned on $(s,a)$, as well as  a hidden parameter $\theta$ sampled from $\Theta$.
$r(s,a,s')$ is the environment reward function. By taking all $s,a,s'$ into account, the reward function inherently includes the transition information and does not change in different dynamics. We also assume $r(s,a,s')$ w.r.t. the action $a$ is $\lambda$-Lipschitz. Discussions on these Lipschitz properties can be found in Appendix~\ref{append:thm_discuss}. $s'$ is termed as \textit{accessible} from $s$ under dynamics $T$\footnote{$T$ without subscript $\theta$ refers to the transition function under any of the hidden parameter $\theta$.} if $\sum_{a\in\mathcal A} T(s'|s,a)>0$. 
$\gamma\in(0,1)$ is the discount factor and $\rho_0(s)$ is the initial state distribution.  

Policy optimization under dynamics shift aims at finding the optimal policy that maximizes the expected return under all possible $\theta\in\Theta$: $\pi^*=\argmax_\pi\eta(\pi)=\mathbb E_{\theta} \mathbb E_{\pi,T_\theta}\left[\sum_{t=0}^\infty\gamma^t r(s_t,a_t,s_{t+1})\right]$, where the expectation is under $s_0\sim\rho_0$, $a_t\sim\pi(\cdot|s_t)$, and $s_{t+1}\sim T_\theta(\cdot|s_t,a_t)$. The Q-value $Q_T^\pi(s,a)$ denotes the expected return after taking action $a$ at state $s$: $Q_T^\pi(s,a)$$=E_{\pi,T}\left[\sum_{t=0}^\infty\gamma^t r(s_t,a_t,s_{t+1})|s_0=s,a_0=a\right]$. The value function is defined as $V_T^\pi(s)=\mathbb E_{a\sim\pi(\cdot|s)}Q_T^\pi(s,a)$. The optimal policy $\pi^*$ under $T$ is defined as $\pi_T^*=\argmax_\pi \mathbb E_{s\sim\rho_0}V_T^\pi(s)$. 
We also intensely use the stationary state distribution~(also referred to as the state occupation function) $d_{T}^\pi(s)=(1-\gamma)$ $\sum_{t=0}^{\infty} \gamma^t p\left(s_t=s \mid \pi,T\right)$. The stationary state distribution under the optimal policy is denoted as $d^*_T(s)$, which is the shorthand for $d^{\pi_T^*}_T(s)$. $d^*_T(s)$ will be briefly termed as optimal state distribution in the rest of this paper.


\subsection{Related Work}
\vspace{-0.1cm}
\paragraph{Cross-domain Policy Transfer}
Cross-domain policy transfer~\citep{niu2024comprehensive} focuses on training policies in source domains and testing them in the target domain. In this paper, we focus on a related problem of efficient training in multiple source domains. The resulting algorithm can be combined with any of the following cross-domain policy transfer algorithms to improve the test-time performance. In online RL, 
VariBAD~\citep{luisa2020varibad} trains a context encoder with variational inference and trajectory likelihood maximization. CaDM~\citep{lee2020context} and ESCP~\citep{luo2022adapt} construct auxiliary tasks including next state prediction and contrastive learning to train the encoders. Instead of relying on context encoders, DARC~\citep{eysenbach2021off} makes domain adaptation by assigning higher rewards on samples that are more likely to happen in the target environment. Encoder-based~\citep{chen2021offline} and reward-based~\citep{liu2022dara} policy transfer algorithms are also effective in offline policy adaptation and have been extended to offline-to-online tasks~\citep{niu2022when,niu2023h2oplus}. VGDF~\citep{xu2023cross} use ensembled value estimations to perform prioritized Q-value updates, which can be applied in both online and offline settings. SRPO~\citep{xue2023state} focus on a similar setting of efficient data usage with this paper, but is based on a strong assumption of universal identical state accessibility. We demonstrate that such an assumption will not hold in many tasks and a more delicate characterization of state accessibility will lead to better theoretical and empirical results.

\vspace{-0.3cm}
\paragraph{Imitation Learning from Observations}
Imitation Learning from Observation (IfO) approaches obviate the need of imitating expert actions and is suitable for tasks where action demonstrations may be unavailable. BCO~\citep{wu2019behavior} and GAIfO~\citep{faraz2018generative} are two natural modifications of traditional Imitation Learning (IL) methods~\citep{jonothan2016generative} with the idea of IfO.  IfO has also been found promising when the demonstrations are collected from several environments with different dynamics. Usually an inverse dynamics model is first trained with samples from the target environment by supervised learning~\citep{wu2019behavior,ilija2021state}, variational inference~\citep{liu2020state}, or distribution matching~\citep{desai2020imitation}. It is then used to recover the adapted actions in samples from the source environment. The recovered samples can be used to update policies with action discrepancy loss~\citep{gangwni2020state,ilija2021state,liu2020state}.
To our best knowledge, only HIDIL~\citep{jiang2020offline} considered state distribution mismatch across different dynamics, where policies are allowed to take extra steps to reach the next state specified in the expert demonstration. We consider in this paper a more general setting where states in expert demonstrations may even be inaccessible.
\section{Accessible States Oriented Policy Learning}
\vspace{-0.2cm}
\label{sec:3}
In this section, we first provide a motivating example in Sec.~\ref{sec:motivating}, where expert state trajectories can mislead the learning policy in cross-dynamics training due to different state accessibility. Based on the globally accessible states defined in Sec.~\ref{sec:3.2}, we then propose a general approach of $\mathcal F$-distance regularized policy optimization. In Sec.~\ref{sec:infinite_thm} and Sec.~\ref{sec:finite_thm}, we propose two instantiations of the $\mathcal F$-distance and respectively provide infinite-sample and finite-sample performance analyses for the policy regularization method. In Sec.~\ref{sec:algo}, we propose a GAN-like instantiation of the $\mathcal F$-distance, giving rise to the practical algorithm.

\vspace{-0.2cm}
\subsection{Motivating Example}
\vspace{-0.2cm}
\label{sec:motivating}

Previous IfO approaches imitate expert policies through state distribution matching~\citep{faraz2018behavioral,desai2020imitation,jiang2020offline} for cross-dynamics policy training.
\begin{wrapfigure}[15]{r}{0.23\textwidth}
    \centering
    \vspace{-0.25cm}
    \includegraphics[width=1.0\linewidth]{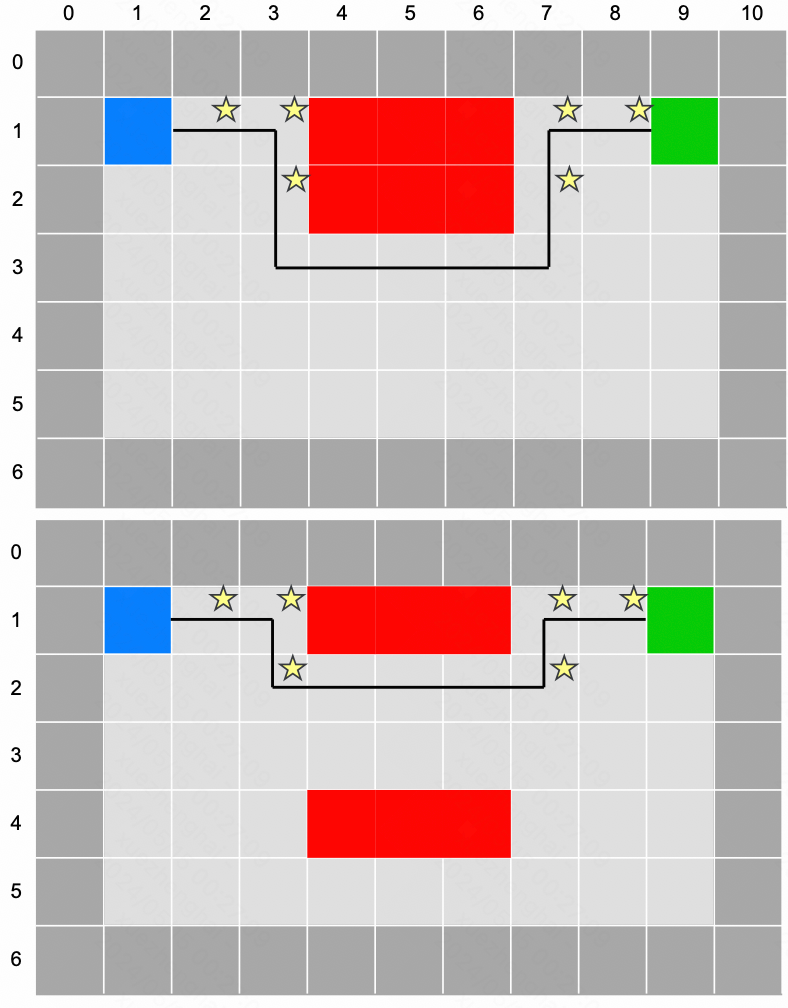}
    \vspace{-0.8cm}
    \caption{Lava world example with dynamics shift. 
    }
    \label{fig:motivation_lava}
\end{wrapfigure}
Fig.~\ref{fig:motivation_lava} demonstrates a lava world task with dynamics shift where expert state distributions are no longer reliable. 
Considering a three dimensional
state space including agent row, agent column, and a 0-1 variable indicating whether there is an accessible lava block within one step reach of the agent. 
The agent starts from the blue grid and targets at the green grid with positive reward. It also receives a small negative reward on each step. The red grid stands for the dangerous lava area which ends the trajectory on agent entering. One lava block is fixed at Row 1, while the other may appear at Row 2, 3, 4, and 5. Fig.~\ref{fig:motivation_lava} demonstrates two examples where the movable lava block is at Row 2 and Row 4. Taking state (1,3,0) as an example, the same action of ``moving down'' gives rise to different next state distributions due to distinct lava positions, leading to environment dynamics shift. 

The state trajectories of the optimal policies on two example lava environments are plotted with black lines. The optimal state distributions are different under distinct environment dynamics. For example, state (3,3,0) has non-zero probability under $d^*(s)$ in Fig.~\ref{fig:motivation_lava} (above), but cannot be visited by the optimal policy in Fig.~\ref{fig:motivation_lava} (bottom). Existing state-only policy transfer algorithms will therefore not be suitable for such seemingly simple task, as demonstrated by the empirical results in Appendix~\ref{append:minigrid}. 
The main cause of this distribution difference is the \textit{break of accessibility}. State (4,2) is accessible from (3,2) in Fig.~\ref{fig:motivation_lava} (lower), but is inaccessible in Fig.~\ref{fig:motivation_lava} (upper). The inaccessible states will certainly have zero visitation probability and make the optimal state distribution different.  Such break of accessibility can also happen in various real-world tasks, as discussed in Appendix~\ref{appendix:distinct}.
\vspace{-0.2cm}

\subsection{$\mathcal F$-distance Regularized Policy Optimization with Accessible State Distribution}
\vspace{-0.2cm}
\label{sec:3.2}
Motivated by examples in Sec.~\ref{sec:motivating}, we propose to ignore inaccessible states and focus on states that are accessible under all possible dynamics. We term the latter as \textit{globally accessible states} with the following formal definition.
\begin{definition}[Globally Accessible States]
In an HiP-MDP $(\mathcal{S}, \mathcal{A}, \Theta, T, r, \gamma, \rho_0)$, a state $s\in\mathcal S$ is called an \textit{globally accessible state} if for all $\theta\in\Theta$, there exist a policy $\pi$, such that $d^\pi_{T_\theta}(s)>0$. $S^+\subseteq S$ denotes  the set of globally accessible states. The \textit{accessible state distribution} of policy $\pi$ under dynamics $T$ is defined as $d_{T}^{\pi,+}(s)=(1-\gamma)\sum_{t=0}^{\infty} \gamma^t p\left(s_t=s, s_t\in S^+|\pi, T\right)/Z(\pi)$, where $Z(\pi)={\sum_{t=0}^{\infty} \gamma^t p\left(s_t\in S^+|\pi, T\right)}$ is the normalizing term.
\end{definition}
\vspace{-0.2cm}
In the lava world example in Fig.~\ref{fig:motivation_lava}, the intersection of optimal state trajectories and globally accessible states is marked with yellow stars. 
These states are more likely to be visited by the expert policies across both dynamics, effectively filtering out misleading states that are optimal in one dynamics and suboptimal in others. Consequently, expert state distributions can be safely imitated on globally accessible states under dynamics shifts.

Meanwhile, IL relies on access to expert trajectories, limiting its applicability in broader scenarios. In a more general RL setting, where the dataset consists of trajectories with mixed policy performance and reward labels, policy optimization to maximize environment reward is often more preferable.
To simultaneously leverage the expert state distribution,
we propose to regularize the training policy to generate a stationary state distribution that closely aligns with the expert accessible state distribution $d_{T}^{*,+}(s)$. The regularization is exerted with an upper-bound constraint on the $\mathcal F$-distance.
\begin{definition}[$\mathcal F$-distance, Definition 2 in \citep{sanjeev2017generalization}]
    Let $\mathcal{F}$ be a class of functions from $\mathbb{R}^d$ to $[0,1]$ such that if $f \in \mathcal{F}, 1-f \in$ $\mathcal{F}$. Let $\phi$ be a concave measuring function. Then the $\mathcal{F}$-divergence with respect to $\phi$ between two distributions $\mu$ and $\nu$ is defined as
    \vspace{-0.1cm}
    $$
    d_{\mathcal{F}, \phi}(\mu, \nu)=\sup _{\omega \in \mathcal{F}} \underset{x \sim \mu}{\mathbb{E}}[\phi(\omega(x))]+\underset{x \sim \nu}{\mathbb{E}}[\phi(1-\omega(x))]-C(\phi),
    $$
where $C(\phi)$ is irrelevant to $\mathcal F, \mu$, and $\nu$.
\end{definition}
\vspace{-0.3cm}
The resulting regularized policy optimization problem is formulated as follows:
\vspace{-0.3cm}
\begin{equation}
    \begin{aligned}
    \label{eq:f_opt_prob}
        &\max_\pi~\mathbb{E}_{\theta,\tau_\pi} \sum_{t=0}^{\infty} \gamma^t r\left(s_t, a_t, s_{t+1}\right)\\
        &\text{s.t.}~\max\limits_T d_{\mathcal F,\phi}\left(d^\pi_T(\cdot), d^{*,+}_{T_0}(\cdot)\right)<\varepsilon,
        \end{aligned}
    \end{equation}
where $T_0$ is an arbitrary environment dynamics. In the following, we will demonstrate the effectiveness of this policy regularization framework both theoretically and empirically.

\subsection{Infinite Sample Analysis with JS Divergence Instantiation}
\vspace{-0.1cm}
\label{sec:infinite_thm}
In Sec.~\ref{sec:infinite_thm} and Sec.~\ref{sec:finite_thm}, we conduct theoretical analysis on how the policy regularization method in Eq.~\eqref{eq:f_opt_prob} influences policy performance under dynamics shift. We start with the definition of \mra MDPs, which formally characterizes the required accessibility property of MDPs so that expert state distributions can be imitated in a dynamics-agnostic approach. 
\begin{definition}
    Consider MDPs $\mathcal M_1=(\mathcal{S}, \mathcal{A}, T_1, r, \gamma, \rho_0)$ and $\mathcal M_2=(\mathcal{S}, \mathcal{A}, T_2, r, \gamma, \rho_0)$. If for all transitions $(s, a, s')$ with $T_1(s'|a,s)>0$, there exists states $s_0, s_1, \cdots, s_{N}$ and actions $a_0, a_1,\cdots,a_{N-1}$ such that $N\leqslant M$, $s_0=s$, $s_N=s'$, $\prod_{n=1}^{N} T_2(s_n|s_{n-1},a_{n-1})>0$, and $$\left|\sum_{n=1}^{N-1} \gamma^{n-1}r(s_{n}, a_{n},s_{n+1})+(1-\gamma^{N-1})V^*_{T_1}(s_0)\right|\leqslant~R_s,$$ $\mathcal M_2$ is referred to as \mra from $\mathcal M_1$.
\end{definition}
In this definition, $M$ is the number of extra steps required in $\mathcal M_2$ to reach the state $s'$ from $s$, compared with in $\mathcal M_1$. $R_s$ constrains the reward discrepancy in these extra steps. 
One special case is when $\mathcal M_2$ and $\mathcal M_1$ are $1$-$0$ accessible from each other, all states between $(s,s')$ will have the same state accessibility. It is identical to the property of ``homomorphous MDPs''~\citep{xue2023state}, based on which a theorem about identical optimal state distribution can be proved. Most of the previous approaches in IfO is built upon such assumption of $1$-$0$-accessible MDPs, which is an over-simplification in many tasks. For example, the Minigrid environment in Sec.~\ref{sec:motivating} contains $3$-$0.03$ accessible MDPs. Instead, our derivations are based on the milder assumption on \mra MDPs that fits for a broader range of practical scenarios.
%

We first consider the case where policy regularization is performed with infinite samples, so that the constraint on $\mathcal F$-distance is followed during both training and validation. We utilize JS divergence with the following instantiation of the $\mathcal F$-distance.
\begin{proposition}
    When $\phi(t)=\log (t)$ and $\mathcal{F}=\left\{\right.$all functions from $\mathbb{R}^d$ to $\left.[0,1]\right\}$, $d_{\mathcal{F}, \phi}$ is the JS divergence.
\end{proposition}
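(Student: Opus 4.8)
The plan is to reduce the supremum over $\mathcal F$ to a pointwise maximization, which is legitimate precisely because here $\mathcal F$ consists of \emph{all} measurable functions from $\mathbb R^d$ to $[0,1]$. Writing the objective as an integral against the densities,
\begin{equation*}
d_{\mathcal F,\phi}(\mu,\nu)+C(\phi)=\sup_{\omega\in\mathcal F}\int\left[\mu(x)\log\omega(x)+\nu(x)\log\bigl(1-\omega(x)\bigr)\right]dx,
\end{equation*}
I would first argue that since the value $\omega(x)\in[0,1]$ may be chosen independently at each point, the supremum of the integral equals the integral of the pointwise suprema. This interchange is the step needing the most care: it holds because the full class is closed under arbitrary measurable modification, so that once the pointwise maximizer is shown to be a measurable function of $x$ it is itself an admissible $\omega$.

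For the pointwise problem I would fix $x$ and maximize $g(u)=\mu(x)\log u+\nu(x)\log(1-u)$ over $u\in[0,1]$. As $\log$ is strictly concave, $g$ is concave, and the first-order condition $g'(u)=\mu(x)/u-\nu(x)/(1-u)=0$ gives the Bayes-optimal discriminator
\begin{equation*}
\omega^*(x)=\frac{\mu(x)}{\mu(x)+\nu(x)}.
\end{equation*}
Concavity certifies this critical point as the global maximizer, $\omega^*$ is manifestly measurable (closing the interchange argument), and the degenerate locus $\{\mu(x)+\nu(x)=0\}$ is $(\mu+\nu)$-null and therefore contributes nothing to the integral.

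Substituting $\omega^*$ back, the integrand becomes $\mu(x)\log\frac{\mu(x)}{\mu(x)+\nu(x)}+\nu(x)\log\frac{\nu(x)}{\mu(x)+\nu(x)}$. Introducing the mixture $m=\tfrac12(\mu+\nu)$ and extracting the $\log\tfrac12$ factors, I would identify the supremum with
\begin{equation*}
D_{\mathrm{KL}}(\mu\,\|\,m)+D_{\mathrm{KL}}(\nu\,\|\,m)-2\log 2,
\end{equation*}
the first two terms being (twice) the Jensen--Shannon divergence of $\mu$ and $\nu$. Recalling that the cited definition fixes $C(\phi)=2\phi(1/2)=2\log\tfrac12=-2\log2$, the two constants cancel and $d_{\mathcal F,\phi}(\mu,\nu)=D_{\mathrm{KL}}(\mu\,\|\,m)+D_{\mathrm{KL}}(\nu\,\|\,m)$, which is the JS divergence up to the conventional leading factor.

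The genuinely delicate point is the measure-theoretic justification of the sup--integral interchange together with the handling of the null set where both densities vanish; everything after that is the standard Goodfellow-style optimal-discriminator computation. A secondary bookkeeping issue is matching the constant: one must invoke the specific form $C(\phi)=2\phi(1/2)$ from the referenced definition so that the additive $-2\log2$ is absorbed and the claimed identification with the JS divergence is exact (modulo the factor-of-two convention).
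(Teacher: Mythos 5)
Your proof is correct and coincides with the intended argument: the paper states this proposition without its own proof, importing it from \citet{sanjeev2017generalization}, whose justification is precisely the standard Goodfellow-style optimal-discriminator computation you carry out. Your treatment of the sup--integral interchange via measurability of $\omega^*(x)=\mu(x)/(\mu(x)+\nu(x))$, the null set where both densities vanish, and the constant $C(\phi)=2\phi(1/2)=-2\log 2$ (so the result is the JS divergence up to the usual factor-of-two convention) all match that derivation, so there is nothing to add.
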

\vspace{-0.2cm}
We show that the learning policy $\hat\pi$ will have a performance lower-bound given a bounded JS-divergence with the optimal accessible state distribution.
\begin{theorem}
\label{thm:main_1}
    Consider the HiP-MDP $(\mathcal{S}, \mathcal{A}, \Theta, T, r, \gamma, \rho_0)$ with its MDPs \mra from each other. For all policy $\hat{\pi}$, if there exists one certain dynamics $T_0$ such that $\max\limits_{\theta\in\Theta} D_{\mathrm{JS}}(d^{\hat{\pi}}_{T_\theta}(\cdot)\|d^{*,+}_{T_0}(\cdot))\leqslant\varepsilon$, we have
\vspace{-0.3cm}  
 \begin{equation}
\label{eq:return_diff}
\eta(\hat{\pi})\geqslant\max\limits_\theta\eta(\pi^*_{T_\theta})-\dfrac{2R_s+6\lambda+2R_{\max}\varepsilon}{1-\gamma}.
\end{equation}
\end{theorem}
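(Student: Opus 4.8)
The plan is to control the single-dynamics return of $\hat\pi$ under each $T_\theta$ separately and then average over $\theta$, chaining three comparisons whose slacks will match the three terms $2R_s$, $6\lambda$, and $2R_{\max}\varepsilon$ in Eq.~\eqref{eq:return_diff}. First I would rewrite every return through its occupancy measure via the identity $\mathbb E_{\pi,T}\big[\sum_t\gamma^t r(s_t,a_t,s_{t+1})\big]=\tfrac{1}{1-\gamma}\,\mathbb E_{s\sim d^\pi_T}\mathbb E_{a\sim\pi,\,s'\sim T}[r(s,a,s')]$, turning both $\eta(\hat\pi)$ and $\eta(\pi^*_{T_\theta})$ into integrals of a per-state expected reward against a stationary state distribution. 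This isolates the only object the hypothesis controls, namely $d^{\hat\pi}_{T_\theta}$, and reduces the theorem to comparing it against the optimal occupancy measures under the various dynamics.

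Second, I would convert the hypothesis $\max_\theta D_{\mathrm{JS}}(d^{\hat\pi}_{T_\theta}\|d^{*,+}_{T_0})\le\varepsilon$ into a bound on the gap of expected rewards. Using a bound relating JS divergence to the total-variation-type gap of integrals together with $|r|\le R_{\max}$, the quantity $\big|\mathbb E_{d^{\hat\pi}_{T_\theta}}[\bar r]-\mathbb E_{d^{*,+}_{T_0}}[\bar r]\big|$ is controlled by a reward-range factor times the distributional distance, which yields the $\tfrac{2R_{\max}\varepsilon}{1-\gamma}$ contribution. Here I must also account for the restriction to $S^+$ and the normalizer $Z(\pi)$ inside $d^{*,+}_{T_0}$: I would argue that the discarded states carry no usable reward for the comparison because they are unreachable under at least one dynamics, so removing them and renormalizing perturbs the target return by at most the stated slack.

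Third, and this is the crux, I would use the $M$-$R_s$ accessibility to relate the optimal occupancy measure under $T_0$ to what is achievable under an arbitrary $T_\theta$. For each transition $(s,a,s')$ taken along the $T_0$-optimal trajectory, the definition supplies a detour of length $N\le M$ realizing the same endpoints in $T_\theta$, whose accumulated reward plus the discount-correction term $(1-\gamma^{N-1})V^*_{T_0}(s_0)$ is bounded by $R_s$. I would stitch these detours along the trajectory and telescope the per-step discrepancies so that the correction terms cancel across concatenated segments, leaving a total reward deficit controlled by $\tfrac{2R_s}{1-\gamma}$, the factor two arising from passing through the reference dynamics in both directions. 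Wherever a detour reaches a state through a different action, the $\lambda$-Lipschitz reward bounds each mismatch by $2\lambda$ since actions lie in $[-1,1]$, and I expect three such comparison points — recovering $\hat\pi$'s action, the forward detour, and the backward comparison — to produce the $\tfrac{6\lambda}{1-\gamma}$ term.

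Finally, I would assemble the chain: start from $\eta(\hat\pi)$, pass to the reward integral against $d^{\hat\pi}_{T_\theta}$, swap the distribution for $d^{*,+}_{T_0}$ at cost $\tfrac{2R_{\max}\varepsilon}{1-\gamma}$, invoke accessibility to reach the baseline $\max_\theta\eta(\pi^*_{T_\theta})$ at cost $\tfrac{2R_s+6\lambda}{1-\gamma}$, and take the maximum over $\theta$. The main obstacle I anticipate is the accessibility-bridging bookkeeping in the third step: making the telescoping of variable-length detours rigorous, ensuring the $(1-\gamma^{N-1})V^*$ corrections cancel cleanly when segments are concatenated, and verifying that the detour states stay inside $S^+$ so the whole argument remains within the globally accessible set. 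A secondary subtlety is keeping the JS-to-reward conversion consistent with the linear-in-$\varepsilon$ form of the stated bound.
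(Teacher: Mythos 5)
Your proposal follows essentially the same route as the paper's proof: a triangle-inequality chain through an intermediate policy $\tilde\pi$ satisfying $d^{\tilde\pi}_{T_1}=d^{*,+}_{T_0}$, combining (i) an imitation-learning lemma converting the JS hypothesis into the $\tfrac{2R_{\max}\varepsilon}{1-\gamma}$ return gap, (ii) the $M$-$R_s$ detour construction with telescoped $(1-\gamma^{N-1})V^*$ corrections, applied twice --- once to bound $|V^*_{T_1}-V^*_{T_2}|$ across dynamics and once to compare the full optimum $\eta_{T_2}(\pi^*_{T_2})$ with the optimum $\eta^+_{T_2}(\pi^{+,*}_{T_2})$ restricted to $S^+$, each costing $\tfrac{R_s+2\lambda}{1-\gamma}$ --- and (iii) a Lipschitz action-coupling term $\tfrac{2\lambda}{1-\gamma}$, so that your accounting $2R_s+6\lambda+2R_{\max}\varepsilon$ matches the paper's four-term decomposition exactly. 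The one bookkeeping correction: the restriction to $S^+$ and the normalizer $Z(\pi)$ are not handled by a ``discarded states carry no usable reward'' argument inside the JS step (which would fail, since the unrestricted optimum may collect reward outside $S^+$); the paper pays for them precisely through the second detour application comparing $\eta^+_{T_2}$ with $\eta_{T_2}$, which your third step's machinery covers in substance even though your second step misattributes the cost.
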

\vspace{-0.3cm}  
Previous approaches~\citep{xu2023cross,janner2019when,xue2023guarded} also provide policy performance lower-bounds, but these bounds have \textit{quadratic} dependencies on the effective planning horizon $\tfrac{1}{1-\gamma}$. By accessible state-based policy regularization, we obtain a tighter discrepancy bound with \textit{linear} dependency on the effective horizon. 

\subsection{Finite Sample Analysis with Network Distance Instantiation}
\vspace{-0.2cm}
\label{sec:finite_thm}
In this subsection, we derive a performance lower-bound of $\hat\pi$ if it is regularized with finite samples from the optimal accessible state distribution. Due to the limited generalization ability of JS distance~\citep{xu2020error}, we characterize the regularization error in Eq.~\eqref{eq:f_opt_prob} with the network distance.
\begin{proposition}
[Neural network distance~\citep{sanjeev2017generalization}]
    When $\phi(t)=t$ and $\mathcal F$ is the set of neural networks, $\mathcal F$-distance is the network distance:
    $
        d_{\mathcal{F}}(\mu, \nu)=\sup _{F \in \mathcal{F}}\left\{\mathbb{E}_{s \sim \mu}[F(s)]-\mathbb{E}_{s \sim \nu}[F(s)]\right\}.
    $ 
\end{proposition}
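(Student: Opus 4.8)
The plan is to prove the identity by direct substitution of $\phi(t)=t$ into the general $\mathcal{F}$-distance formula from the $\mathcal{F}$-distance definition, then algebraically simplify until the expression matches the stated network distance. Since the claim is an exact equality of two suprema taken over the \emph{same} function class $\mathcal{F}$, no inequality or approximation argument is needed; the entire proof reduces to showing that the integrand collapses to $\omega(x)$ integrated against $\mu$ minus $\omega(x)$ integrated against $\nu$, up to an additive constant that the normalizer $C(\phi)$ exactly cancels.

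First I would write out $d_{\mathcal{F},\phi}(\mu,\nu)=\sup_{\omega\in\mathcal{F}}\mathbb{E}_{x\sim\mu}[\phi(\omega(x))]+\mathbb{E}_{x\sim\nu}[\phi(1-\omega(x))]-C(\phi)$ and substitute $\phi(t)=t$, giving $\sup_{\omega\in\mathcal{F}}\mathbb{E}_{x\sim\mu}[\omega(x)]+\mathbb{E}_{x\sim\nu}[1-\omega(x)]-C(\phi)$. Next I would apply linearity of expectation to the second term, $\mathbb{E}_{x\sim\nu}[1-\omega(x)]=1-\mathbb{E}_{x\sim\nu}[\omega(x)]$, so that the objective becomes $\mathbb{E}_{x\sim\mu}[\omega(x)]-\mathbb{E}_{x\sim\nu}[\omega(x)]+1-C(\phi)$. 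Because the $+1$ is a constant independent of $\omega$, it factors outside the supremum, and the remaining task is to show it is annihilated by $C(\phi)$.

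The one step requiring care is pinning down the value of $C(\phi)$. The statement only asserts that $C(\phi)$ is independent of $\mathcal{F},\mu,\nu$, so I would invoke its explicit form from the source definition, namely $C(\phi)=2\phi(1/2)$, which for $\phi(t)=t$ evaluates to $2\cdot\tfrac12=1$. This exactly cancels the additive $+1$, leaving $\sup_{\omega\in\mathcal{F}}\{\mathbb{E}_{x\sim\mu}[\omega(x)]-\mathbb{E}_{x\sim\nu}[\omega(x)]\}$; renaming the optimization variable $\omega$ to $F$ yields precisely the claimed network distance. I expect this constant identification to be the main (and essentially the only) obstacle, since everything else is mechanical.

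As a final consistency check I would confirm that the function class is the same in both formulations: the $\mathcal{F}$-distance requires $\mathcal{F}$ to consist of maps into $[0,1]$ that are closed under $f\mapsto 1-f$, and I would note that taking $\mathcal{F}$ to be neural networks with outputs constrained to $[0,1]$ (e.g., through a sigmoid output head) satisfies this closure, so the supremum in the network distance is genuinely taken over the same class used in the general definition. With the class matched and the constant identified, the equality follows immediately.
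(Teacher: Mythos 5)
Your proof is correct and is exactly the intended verification: the paper states this proposition without proof (citing Arora et al., 2017, where $C(\phi)=2\phi(1/2)$ is explicit), and your mechanical substitution of $\phi(t)=t$ with the cancellation $2\phi(1/2)=1$ is the standard derivation. Your closing check that the neural network class maps into $[0,1]$ and is closed under $f\mapsto 1-f$ correctly handles the only subtlety.
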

Given $m$ samples and the network distance bounded by $\varepsilon_\mathcal{F}$, we analyze the generalization ability of the policy regularization in Eq.~(\ref{eq:f_opt_prob}) with the following theorem.
\begin{theorem}
\label{thm:finite}
Consider the HiP-MDP $(\mathcal{S}, \mathcal{A}, \Theta, T, r, \gamma, \rho_0)$ with its MDPs \mra from each other.  
Given $\{s^{(i)}\}_{i=1}^m$ sampled from $d^{+,*}_{T_0}$, for policy $\hat{\pi}$ regularized by $\hat{d}_{T_0}^{+,*}$  with the constraint $\max\limits_{\theta\in\Theta} d_{\mathcal F}(\hat d^{\hat\pi}_{T_\theta}, \hat{d}_{T_0}^{+,*})<\varepsilon_{\mathcal F}$, we have\footnote{The $\mathcal O$ notation omits constants irrelevant to $m, \varepsilon_F$, and $\gamma$. The full version is in Thm.~\ref{thm:append_finite}.}
\vspace{-0.3cm}
    \begin{equation}
    \begin{aligned}
    &\eta(\hat{\pi})\geqslant\max\limits_\theta\eta(\pi_{T_\theta}^*)-\dfrac{2R_s+8\lambda}{1-\gamma}-\mathcal{O}(\frac{1/\sqrt{m}+\varepsilon_{\mathcal F}}{1-\gamma})
    \end{aligned}
    \end{equation}
     with probability at least $1-\delta$, where $\hat d^{\hat\pi}_{T}$ and $ \hat{d}_{T_0}^{+,*}$ are the empirical version of distributions $d^{\hat\pi}_{T}$ and $d_{T_0}^{+,*}$ on $\{s^{(i)}\}_{i=1}^m$.
     \end{theorem}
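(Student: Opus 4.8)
The plan is to prove Theorem~\ref{thm:finite} in two phases: first convert the empirical network-distance constraint into a population-level constraint by a uniform-convergence argument, and then feed that population constraint into essentially the same cross-dynamics performance decomposition used for Theorem~\ref{thm:main_1}, with the Pinsker-type step replaced by the integral-probability-metric property of the network distance.

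Phase one handles the genuinely new, finite-sample ingredient. Since $d_{\mathcal F}$ is an integral probability metric over the neural network class $\mathcal F$, the gap $|d_{\mathcal F}(\hat\mu,\hat\nu)-d_{\mathcal F}(\mu,\nu)|$ is controlled uniformly over $\mathcal F$ by the Rademacher complexity (equivalently the covering number) of the bounded-capacity class $\mathcal F$, which scales as $\mathcal O(1/\sqrt m)$ by the generalization theorem of~\citep{sanjeev2017generalization}. First I would invoke this bound to show that, with probability at least $1-\delta$, $d_{\mathcal F}(d^{\hat\pi}_{T_\theta}, d^{+,*}_{T_0}) \leqslant d_{\mathcal F}(\hat d^{\hat\pi}_{T_\theta}, \hat d_{T_0}^{+,*}) + \mathcal O(1/\sqrt m) < \varepsilon_{\mathcal F}+\mathcal O(1/\sqrt m)$. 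A crucial point is that this deviation bound is simultaneous over all test functions $F\in\mathcal F$, so a single high-probability event suffices for the $\max_{\theta\in\Theta}$ appearing in the constraint rather than a per-$\theta$ union bound; this is what keeps the confidence at $1-\delta$ and the rate at $1/\sqrt m$.

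Phase two reuses the machinery behind Theorem~\ref{thm:main_1}, which I treat as given. I would fix $\theta^\star\in\argmax_\theta\eta(\pi^*_{T_\theta})$ and reroute the trajectories of $\pi^*_{T_{\theta^\star}}$ into the other dynamics using the \mra property; exactly as in Theorem~\ref{thm:main_1}, the bounded reward discrepancy along these rerouted paths produces the $2R_s/(1-\gamma)$ term, while the $\lambda$-Lipschitzness of $r(s,a,s')$ in $a$ absorbs the action mismatch along the extra steps. The only structural change is the final distribution-matching step: instead of applying Pinsker to a JS bound and paying $2R_{\max}\varepsilon$, I would use that the suitably normalized value function lies in—or is $\lambda$-close to—the class $\mathcal F$, so that the IPM directly yields $|\mathbb E_{d^{\hat\pi}_{T_\theta}}[V]-\mathbb E_{d^{+,*}_{T_0}}[V]| = \mathcal O\big((\varepsilon_{\mathcal F}+1/\sqrt m)/(1-\gamma)\big)$, the extra factor $1/(1-\gamma)$ arising from the $\mathcal O(R_{\max}/(1-\gamma))$ range of $V$. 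The additional approximation required to place $V$ inside $\mathcal F$ is what inflates the Lipschitz constant from $6\lambda$ to $8\lambda$.

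The main obstacle will be phase one—specifically, ensuring the uniform-convergence bound is genuinely simultaneous across the $\max_\theta$ and across the data-dependent policy $\hat\pi$, whose occupancy $\hat d^{\hat\pi}_{T_\theta}$ is itself an empirical object on the same $m$ samples, so that the empirical constraint transfers to the population without a $\theta$- or $\pi$-indexed union bound degrading the $1/\sqrt m$ rate. Once that transfer is secured, everything else is a controlled substitution into the Theorem~\ref{thm:main_1} decomposition.
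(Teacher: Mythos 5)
Your overall architecture matches the paper's proof of Thm.~\ref{thm:append_finite} quite closely. Phase one is exactly the paper's use of Lemma~\ref{lemma:net_dist} (Lemma 2 of Xu et al.), which transfers the empirical constraint to the population network distance at cost $2\hat{\mathcal R}^{(m)}_{d_{T_0}^{+,*}}(\mathcal P)+2\hat{\mathcal R}^{(m)}_{d_{T}^{\hat\pi}}(\mathcal P)+12\Delta\sqrt{\log(2/\delta)/m}$, i.e.\ your $\mathcal O(1/\sqrt m)$ term; the uniformity over the data-dependent $\hat\pi$ is already built into that lemma via the Rademacher complexities of both empirical occupancies, and the paper handles the range over dynamics by arguing for a fixed pair $(T_1,T_2)$ and then aggregating over $T$, so your simultaneity concern is resolved essentially as you anticipated. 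Phase two also follows the paper's skeleton: the paper introduces a surrogate policy $\tilde\pi$ with $d^{\tilde\pi}_{T_1}=d^{*,+}_{T_2}$ on accessible states (guaranteed by the \mra property) and applies Thm.~\ref{thm:main_1} to $\tilde\pi$ with $\varepsilon=0$, which is where $(2R_s+6\lambda)/(1-\gamma)$ comes from.

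However, your final distribution-matching step has a genuine error. You propose to test the occupancy gap against a normalized value function and claim that $\left|\mathbb E_{d^{\hat\pi}_{T}}[V]-\mathbb E_{d^{*,+}_{T_0}}[V]\right|$ controls the return gap, with the $1/(1-\gamma)$ factor coming from the range of $V$. But the return is not the expectation of $V$ under the stationary state distribution: a direct computation gives $\mathbb E_{s\sim d^{\pi}}[V^\pi(s)]=(1-\gamma)\sum_{k}(k+1)\gamma^k\,\mathbb E[r_k]$, which is a different functional from $\eta(\pi)=\sum_k\gamma^k\,\mathbb E[r_k]$, so an IPM bound with $V$ as the test function does not yield $|\eta_{T}(\hat\pi)-\eta_{T}(\tilde\pi)|$. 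The paper instead writes $\eta_{T}(\pi)=(1-\gamma)^{-1}\mathbb E_{s\sim d^\pi_{T}}[r_{\pi,T}(s)]$ and assumes the one-step expected reward $r_{\hat\pi,T}(s)=\mathbb E_{a\sim\hat\pi,s'\sim T}\,r(s,a,s')$ lies in the linear span of $\mathcal P$, giving $|\eta_{T}(\hat\pi)-\eta_{T}(\tilde\pi)|\leqslant \|r\|_{\mathcal P}\,d_{\mathcal P}(d^{\hat\pi}_{T},d^{\tilde\pi}_{T})/(1-\gamma)+2\lambda/(1-\gamma)$; here the $1/(1-\gamma)$ is the occupancy normalization, not the value range. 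Relatedly, your attribution of the $6\lambda\to 8\lambda$ inflation to approximating $V$ inside $\mathcal F$ is wrong in mechanism: the extra $2\lambda/(1-\gamma)$ arises in Eq.~\eqref{eq:27} from swapping the reward functions $r_{\tilde\pi,T}$ and $r_{\hat\pi,T}$ under the same occupancy $d^{\tilde\pi}_{T}$ via the action-Lipschitz property. Your plan would likely be repaired by replacing value-function realizability with the paper's reward-in-span assumption, but as written the key step fails.
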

Thm.~\ref{thm:finite} shows that with finite samples, the policy regularization still leads to a tight performance lower-bound with linear horizon dependency. The lower-bound is stronger than the sample complexity analysis of Behavior Cloning with quadratic horizon dependency~\citep{xu2020error} and has the same horizon dependency with GAIL~\citep{jonothan2016generative}. 

\begin{algorithm}[!t]
  \caption{The workflow of \algo on top of ESCP~\citep{luo2022adapt}.}
  \label{alg:main}
\begin{algorithmic}[1]
  \STATE {\bfseries Input: } Training MDPs $\{\mathcal M_0,\cdots,\mathcal M_{n-1}\}$; Context encoder $\phi$; Policy network $\pi$; Value network $V$; 
  Discriminator Network $\omega$; Rollout horizon $H$; State partition ratio $\rho_1,\rho_2$; Regularization coefficient $\lambda$; Replay Buffer $\mathcal R$.
  \FOR{$step=0,1, 2, \dots$}
  \STATE Sample MDP $\mathcal M_i$ from $\{\mathcal M_0,\mathcal M_1,\cdots,\mathcal M_{n-1}\}$.
    \FOR{$t=1,2,\dots,H$}
    \STATE Sample $z_t$ from $\phi\left(z \mid s_t, a_{t-1}, z_{t-1}\right)$ and then sample $a_t$ from $ \pi\left(a \mid s_t, z_t\right)$, as in ESCP.
    \STATE Rollout and get transition data $\left(s_{t+1}, r_{t}, d_{t+1}, s_t, a_t, z_t\right)$ from $\mathcal M_i$; Add the data to the replay buffer $\mathcal R$.
    \ENDFOR
    \STATE Sample a batch $D_{\text{batch}}$;
    Add $\rho_1\rho_2|D_{\text{batch}}|$ states with top $\rho_1$ portion of high values and $\rho_2$ portion of high proxy visitation counts to $D_{P}$; Add other states to $D_{Q}$.
    \STATE Train $\omega$ as a GAN discriminator, regarding $D_P$ as the real dataset and $D_Q$ as the generated dataset.
    \STATE For one-step transition in $D_{\text{batch}}$, update $r_t$ with $r_t+\lambda\log\omega(s_t)$.
\STATE Use the updated $D_{\text{batch}}$ to update $\phi$, $\pi$, and $V$.
  \ENDFOR
\end{algorithmic}
\end{algorithm}

\subsection{Practical Algorithm with GAN-like Objective Function}
\label{sec:algo}
In this subsection, we design practical algorithms to solve the policy regularization problem in Eq.~(\ref{eq:f_opt_prob}). The first challenge is to estimate the value of a certain instantiation of the $\mathcal F$-distance. With the following proposition, we manage to do so with a GAN-like optimization process.
\begin{proposition}
    When $\mathcal{F}$ is a set of neural networks and $\phi(t)=\log t$, the $\mathcal F$-distance $d_{\mathcal{F}, \phi}(\mu,\nu)$ is equivalent to the discriminator's objective function in a GAN~\cite{goodfellow2014generative}, where $\mu$ is the real data distribution $\hat{\mathcal{D}}_{\text{real}}$ and $\nu$ is the generated data distribution $\hat{\mathcal{D}}_{G}$.
\end{proposition}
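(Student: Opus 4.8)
The plan is to prove this by direct substitution into the definition of the $\mathcal{F}$-divergence and then matching the resulting expression term-by-term with the standard GAN discriminator objective. First I would set $\phi(t)=\log t$ in the definition of $d_{\mathcal{F},\phi}$, which yields
$$d_{\mathcal{F},\phi}(\mu,\nu)=\sup_{\omega\in\mathcal{F}}\mathbb{E}_{x\sim\mu}[\log\omega(x)]+\mathbb{E}_{x\sim\nu}[\log(1-\omega(x))]-C(\phi).$$
I would then recall the original GAN value function for a fixed generator, $V(D)=\mathbb{E}_{x\sim p_{\mathrm{data}}}[\log D(x)]+\mathbb{E}_{z\sim p_z}[\log(1-D(G(z)))]$, and rewrite the second expectation as one over the generated distribution, so that $V(D)=\mathbb{E}_{x\sim\hat{\mathcal{D}}_{\mathrm{real}}}[\log D(x)]+\mathbb{E}_{x\sim\hat{\mathcal{D}}_{G}}[\log(1-D(x))]$. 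Identifying the witness function $\omega$ with the discriminator $D$, the class $\mathcal{F}$ with the space of admissible discriminator networks, $\mu$ with $\hat{\mathcal{D}}_{\mathrm{real}}$, and $\nu$ with $\hat{\mathcal{D}}_{G}$, the two summands coincide exactly.

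The remaining gap is the additive term $-C(\phi)$ together with the supremum over $\mathcal{F}$. Since $C(\phi)$ is, by the definition of the $\mathcal{F}$-divergence, a constant independent of $\mathcal{F}$, $\mu$, and $\nu$, it only shifts the objective by a fixed amount and therefore leaves the maximizing discriminator unchanged. Consequently $d_{\mathcal{F},\phi}(\mu,\nu)=\sup_{D}V(D)-C(\phi)$, i.e. the $\mathcal{F}$-distance equals the optimal GAN discriminator objective up to a distribution-independent constant, which is the sense of ``equivalence'' being claimed; in particular the witness that attains the supremum in $d_{\mathcal{F},\phi}$ is precisely the optimal discriminator, so the same training procedure computes both.

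Finally I would verify that the structural hypotheses of the $\mathcal{F}$-divergence definition are satisfied by the neural-network instantiation. The definition requires $\mathcal{F}$ to consist of functions mapping into $[0,1]$ and to be closed under $f\mapsto 1-f$; a discriminator network terminating in a sigmoid activation indeed outputs values in $[0,1]$, and its complement $1-D$ is realized by the same architecture with negated final pre-activation, so the closure property holds. The measuring function $\phi=\log$ is concave on $(0,1]$, as required. I expect no serious obstacle here, since the result is essentially definitional; the only point warranting a sentence of care is that the supremum in $d_{\mathcal{F},\phi}$ ranges over the entire class $\mathcal{F}$ whereas GAN training optimizes $D$ by stochastic gradient ascent, which I would flag as an optimization-level rather than definitional distinction, leaving the equivalence of the \emph{objective functions} exact.
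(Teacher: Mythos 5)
Your proposal is correct and matches the paper's treatment: the paper regards this proposition as essentially definitional, using exactly the substituted form $\mathbb{E}_{s\sim\mu}[\log\omega^*(s)]+\mathbb{E}_{s\sim\nu}[\log(1-\omega^*(s))]$ (with $C(\phi)$ dropped as an irrelevant constant) in its Lagrangian derivation in Appendix~A.1, which is precisely your direct-substitution argument identifying the witness $\omega$ with the GAN discriminator. Your additional checks---closure of $\mathcal F$ under $f\mapsto 1-f$ via negated final pre-activation, concavity of $\log$, and the remark distinguishing the exact supremum from SGD-based discriminator training---are sound and go slightly beyond what the paper makes explicit.
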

\vspace{-0.2cm}
According to the proposition, $d_{\mathcal{F}, \phi}\left(d_T^\pi(\cdot), d_{T_0}^{*,+}(\cdot)\right)$ in Eq.~(\ref{eq:f_opt_prob}) can be obtained by training a GAN discriminator to classify two datasets, one sampled from $d^\pi_T(\cdot)$ and the other sampled from $d^{*,+}_{T_0}(\cdot)$.
While $d^\pi_T(\cdot)$ can be related to data newly collected in the replay buffer~\citep{liu2021regret,sinha2022experience}, sampling from $d^{*,+}_{T_0}(\cdot)$ is still challenging. We introduce the binary observation state $\mathcal O_t$ with $\mathcal O_t=1$ denoting $s_t$ is the optimal state at timestep $t$~\citep{levine2018reinforcement}. $d^{*,+}_{T_0}(s_t)$ can therefore be written as $d^{\pi,+}_{T_0}(s|\mathcal{O}_{0:\infty})$, which is the distribution of the states generated by a certain $\pi$, given these states are optimal. With the Bayes' rule, we have
\begin{align}
        \frac{d^{*,+}_{T_0}(s)}
        {d^\pi_T(s)}&=\frac{d^{\pi,+}_{T_0}(s|\mathcal{O}_{0:\infty})}{d^\pi_T(s)}
        \nonumber\\
        &=\frac{p(\mathcal O_{0:\infty}|s,\pi,T_0)d^{\pi,+}_{T_0}(s)}{p(\mathcal O_{0:\infty}|\pi,T_0)}\cdot\frac{1}{d^\pi_T(s)} \nonumber\\
       &=\frac{p(\mathcal O_{0:\infty}|s,\pi,T_0)d^\pi_T(s)}{p(\mathcal O_{0:\infty}|\pi,T_0)}\cdot\frac{1}{d^\pi_T(s)}\cdot\frac{d^{\pi,+}_{T_0}(s)}{d^\pi_T(s)}
        \nonumber \\
        &=\frac{d^{*}_{T_0}(s)}{d^\pi_T(s)}\cdot\frac{d^{\pi,+}_{T_0}(s)}{d^\pi_T(s)},\label{eq:density_ratio}
\end{align}
where the last equation is also obtained with the Bayes' rule.
According to~\citet{xue2023state}, state $s$ will be more likely to be sampled from  $d^{*}_{T_0}(\cdot)$ than from $d^\pi_T(\cdot)$ if it has a higher state value $V(s)$ than average. Meanwhile, $d^{\pi,+}_{T_0}(s)$ will be higher if $s$ falls in the set of globally accessible states $S^+$ and is most likely to be visited by various policies under dynamics shift. Therefore, any of the pseudo-count approaches of visitation frequency can be used to measure whether $s$ is more likely to be sampled from $d^{\pi,+}_{T_0}(s)$. In simulated environments with small observation spaces~(Sec.~\ref{sec:exp_offline} and \ref{sec:exp_online}), disagreement in next state predictions of ensembled environment models has shown to be a good proxy of visitation frequency~\citep{yu2020mopo}. In large-scale real-world tasks~(Sec.~\ref{sec:exp_koala}), next state predictions can be unreliable, so we adopt Random Network Distillation~(RND)~\citep{yuri2019exploration} and use the error of predicting a random mapping as the proxy visitation measure.



With the trained discriminator $\omega^*(x)$ in the $\mathcal F$-distance, Eq.~(\ref{eq:f_opt_prob}) can be transformed into an unconstrained optimization problem with the following Lagrangian to maximize\footnote{The derivations are in Appendix~\ref{append:lagrangian}.}:
\begin{equation}
\label{eq:unconstrained_opt}
    \mathbb{E}_{\theta, \pi, T_\theta}\left[\sum_{t=0}^{\infty} \gamma^t\left(r\left(s_t, a_t, s_{t+1}\right)+\lambda \log\omega^*(s_t)\right)\right]+\frac{\lambda \varepsilon}{1-\gamma},
\end{equation}
where $\lambda>0$ is the Lagrangian Multiplier. The only difference between Eq.~(\ref{eq:unconstrained_opt}) and the standard RL objective is that $\log\omega^*(s_t)$ is augmented to the environment reward $r(s_t, a_t, s_{t+1})$. Therefore, the proposed approach can work as an add-on module to a wide range of RL algorithms with reward augmentation.
\begin{table*}[t]
\vspace{-0.4cm}
    \caption{Results of offline experiments on MuJoCo tasks. Numbers before $\pm$ are scores normalized according to D4RL~\citep{fu2020d4rl} and averaged across trials with four different seeds. Numbers after $\pm$ are normalized standard deviations. ME, M, MR and R correspond to the medium-expert, expert, medium-replay and random dataset, respectively.}
    \label{tab:offline}
    \centering
    \small
    \def\arraystretch{1.15}
    \begin{tabular}{l|c|c|c|c|c|c|c|c}
\toprule
~ & BCO & SOIL & CQL & MOPO & MAPLE & \makecell{MAPLE\\+DARA} & \makecell{MAPLE\\+SRPO} & \makecell{MAPLE\\+\algo} \\
\midrule
Walker2d-ME & 0.25±0.04 & 0.14±0.08 & \textbf{0.63}±0.13 & 0.06±0.05 & 0.14±0.08 & 0.31±0.02 & 0.22±0.07 & 0.29±0.12 \\
Walker2d-M & 0.17±0.07 & 0.16±0.01 & \textbf{0.75}±0.02 & 0.15±0.22 & 0.41±0.19 & 0.46±0.10 & 0.32±0.17 & 0.49±0.04 \\
Walker2d-MR & 0.01±0.00 & 0.04±0.01 & 0.06±0.00 & -0.00±0.00 & 0.13±0.01 & 0.12±0.00 & 0.13±0.01 & \textbf{0.14}±0.01 \\
Walker2d-R & 0.00±0.00 & 0.00±0.00 & 0.00±0.00 & -0.00±0.00 & \textbf{0.22}±0.00 & 0.16±0.01 & 0.22±0.00 & 0.22±0.00 \\
Hopper-ME & 0.08±0.02 & 0.01±0.00 & 0.20±0.07 & 0.01±0.00 & 0.45±0.07 & 0.49±0.01 & 0.43±0.06 & \textbf{0.51}±0.06 \\
Hopper-M & 0.00±0.00 & 0.08±0.00 & 0.29±0.06 & 0.01±0.00 & 0.38±0.09 & 0.26±0.02 & 0.48±0.04 & \textbf{0.71}±0.14 \\
Hopper-MR & 0.00±0.00 & 0.00±0.00 & 0.08±0.00 & 0.01±0.01 & 0.55±0.17 & 0.75±0.10 & 0.73±0.16 & \textbf{0.76}±0.08 \\
Hopper-R & 0.00±0.00 & 0.00±0.00 & 0.10±0.00 & 0.01±0.00 & 0.12±0.00 & 0.12±0.00 & 0.25±0.08 & \textbf{0.32}±0.00 \\
HalfCheetah-ME & 0.43±0.00 & 0.00±0.00 & 0.03±0.04 & -0.03±0.00 & 0.53±0.07 & 0.39±0.00 & 0.58±0.04 & \textbf{0.61}±0.02 \\
HalfCheetah-M & 0.14±0.02 & 0.39±0.00 & 0.42±0.01 & 0.36±0.27 & 0.61±0.01 & \textbf{0.66}±0.03 & 0.62±0.00 & 0.62±0.01 \\
HalfCheetah-MR & 0.16±0.01 & 0.25±0.00 & 0.46±0.00 & -0.03±0.00 & 0.52±0.01 & 0.53±0.02 & 0.54±0.00 & \textbf{0.56}±0.01 \\
HalfCheetah-R & 0.14±0.01 & 0.35±0.01 & -0.01±0.01 & -0.03±0.00 & 0.20±0.02 & 0.19±0.01 & \textbf{0.22}±0.00 & 0.21±0.00 \\
\midrule
Average & 0.11 & 0.11 & 0.25 & 0.04 & 0.36 & 0.37 & 0.40 & \textbf{0.45}\\
\bottomrule
    \end{tabular}
\end{table*}

\begin{table*}[t]
    \centering
    \small
\vspace{-0.4cm}
    \caption{Results of ablation studies in Offline MuJoCo tasks. The scores are averaged on each environment with different expert levels.}
    \label{tab:ablation}
    \def\arraystretch{1.1}
    \begin{tabular}{l|c|c|c|c|c|c|c|c|c}
\toprule
    ~ & \makecell{Fixed\\$\lambda=0.1$} & \makecell{Fixed\\$\lambda=0.3$} & \makecell{Random\\ partition} & \makecell{Fixed\\$\rho_1=0$} & \makecell{Fixed\\$\rho_2=0$} & \makecell{Fixed\\$\rho_1$,$\rho_2=0.5$} & \makecell{Fixed\\$\rho_1$,$\rho_2=0.3$} & \makecell{MAPLE\\+\algoe} \\
\midrule
Walker2d & 0.22 & 0.25 & 0.26 & 0.22 & 0.29 & \textbf{0.30} & 0.26 & 0.28 \\
Hopper & 0.31 & 0.54 & 0.30 & 0.47 & 0.38 & 0.46 & 0.54 & \textbf{0.58} \\
HalfCheetah & 0.48 & 0.49 & 0.47 & 0.49 & 0.50 & 0.49 & 0.50 & \textbf{0.50} \\
\midrule
Average & 0.34 & 0.43 & 0.34 & 0.40 & 0.39 & 0.42 & 0.43 & \textbf{0.45}\\
\bottomrule
    \end{tabular}
\end{table*}

Summarizing previous derivations, we obtain a practical reward augmentation algorithm termed as ASOR~(\textbf{A}ccessible \textbf{S}tate \textbf{O}riented \textbf{R}egularization) for policy optimization under dynamics shift. We select the ESCP~\citep{luo2022adapt} algorithm, which is one of the SOTA algorithms in online cross-dynamics policy training, as the example base algorithm. The detailed procedure of ESCP+ASOR is shown in Alg.~\ref{alg:main}.
After the environment rollout and obtaining the replay buffer (line 6), we sample a batch of data from the buffer, obtain a portion of $\rho_1\rho_2$ states with higher values and proxy visitation counts, and add them to the dataset $\mathcal{D}_P$. Other states are added to $\mathcal{D}_Q$. 
Then a discriminator network $\omega$ is trained (line 8) 
to estimate the logarithm of the optimal discriminator output $\lambda\log\omega^*(s)$, which is added to the reward $r_t$ (line 9). 
The effects of hyperparameters $\rho_1$, $\rho_2$, and $\lambda$ are investigated in Sec.~\ref{sec:exp_offline}.
The procedure of the offline algorithm MAPLE~\citep{lee2020context}+\algo is similar to ESCP+\algoe, where the datasets are built with data from the offline dataset rather than the replay buffer.
\section{Experiments}
\label{sec:exp}
In this section, we conduct experiments to investigate the following questions: (1) Can \algo efficiently learn from data with dynamics shift and outperform current state-of-the-art algorithms? (2) Is \algo general enough when applied to different styles of training environments, various sources of environment dynamics shift, and when combined with distinct algorithm setup? (3) How does each component of \algo~(e.g., the reward augmentation and the pseudo-count of state visitations) and its hyperparameters perform in practice? To answer questions (1)(2), we construct cross-dynamics training environments based on tasks including Minigrid~\citep{MinigridMiniworld23}\footnote{We leave the results in Appendix~\ref{append:minigrid}.}, D4RL~\citep{fu2020d4rl}, MuJoCo~\citep{todorov2012mujoco}, and a Fall Guys-like Battle Royal Game. Dynamics shift in these environments comes from changes in navigation maps, evolvements of environment parameters, and different layouts of obstacles. To train RL policies in these environments, \algo is implemented on top of algorithms including PPO~\citep{schulman2017proximal}, MAPLE~\citep{chen2021offline}, and ESCP~\citep{luo2022adapt}, which are all state-of-the-art approaches in the corresponding field. To answer question (3), we visualize how the learned discriminator and the pseudo state count behave in different environments. Moreover, ablation studies are conducted to examine the role of the discriminator and the influence of hyperparameters. Detailed descriptions of baseline algorithms are in Appendix~\ref{append:baseline}. 

\begin{figure*}[t]
    \centering
    \includegraphics[width=0.95\textwidth]{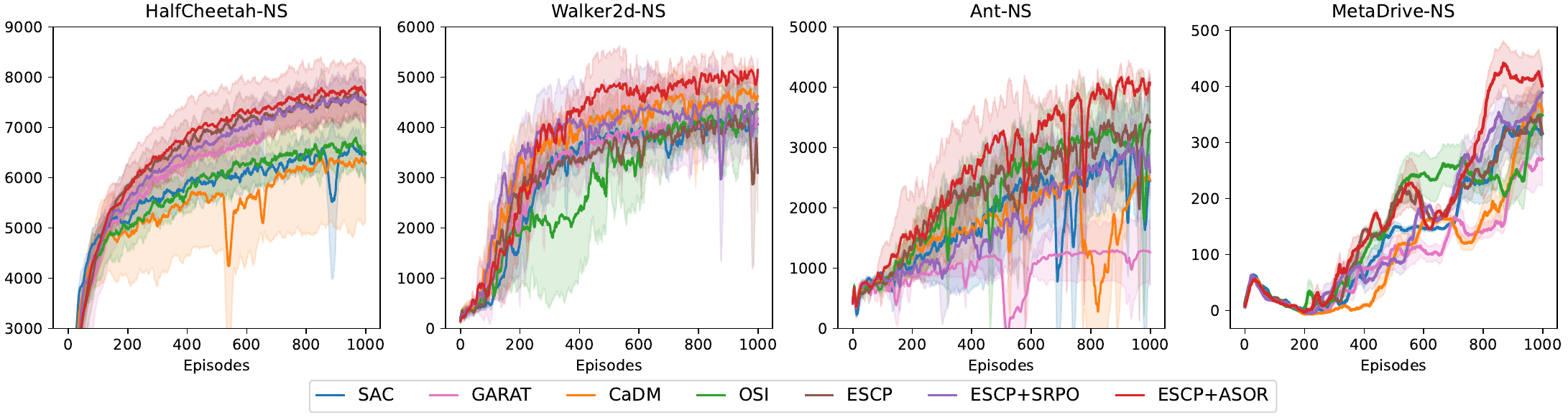}
    \vspace{-0.4cm}
    \caption{Results of online experiments on MuJoCo and MetaDrive tasks. ``NS'' refers to tasks with non-stationary environment dynamics.}
    \vspace{-0.4cm}
    \label{fig:online_res}
\end{figure*}
\begin{figure*}[t]
    \centering
    \includegraphics[width=0.90\textwidth]{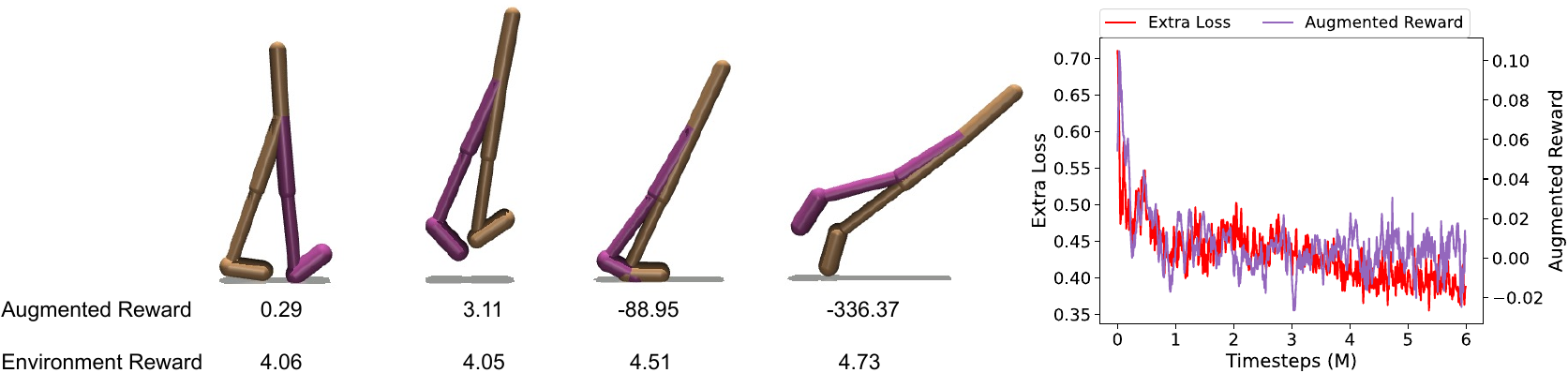}
    \vspace{-0.4cm}
    \caption{\textbf{Left}: Comparisons of the logarithm of the discriminator output, i.e., the augmented reward, and the environment reward on different states in the Walker-2d environment. The augmented reward can better reflect the state optimality. \textbf{Right}: Curves for average extra loss and augmented reward in the fall-guys like game environment.}
    \vspace{-0.4cm}
    \label{fig:s_prior_vis}
\end{figure*}

\subsection{Results in Offline RL Benchmarks}
\label{sec:exp_offline}
For offline RL benchmarks, we collect the static dataset from environments with three different environment dynamics in the format of D4RL~\citep{fu2020d4rl}. Specifically, data from the original MuJoCo environments, environments with 3 times larger body mass, and environments with 10 times higher medium density are included.  
For baseline algorithms, we inlude IfO algorithms BCO~\citep{faraz2018behavioral} and SOIL~\citep{ilija2021state}, standard offline RL algorithms CQL~\citep{kumar2020conservative} and MOPO~\citep{yu2020mopo}, offline cross-domain policy transfer algorithms MAPLE~\citep{chen2021offline}, MAPLE+DARA~\citep{liu2022dara}, and MAPLE+SRPO~\citep{xue2023state}.

The comparative results are exhibited in Tab.~\ref{tab:offline}. IfO approaches have the worst performance because they ignore the reward information (BCO) or cannot safely exploit the offline dataset (SOIL). Without the ability of cross-domain policy learning, CQL and MOPO cannot learn from data with dynamics shift and show inferior performances. Cross-domain policy transfer algorithms MAPLE, MAPLE+DARA, and MAPLE+SRPO show reasonable performance enhancement, while our MAPLE+ASOR algorithm leads to the highest performance. This highlights the effectiveness of policy regularization on accessible states. We discuss the conceptual advantages of ASOR compared with DARA and SRPO in Appendix~\ref{append:comparison}. 

The results of ablation studies are shown in Tab.~\ref{tab:ablation}. They show that a larger value of reward augmentation coefficient $\lambda$ can give rise to performance increase. Meanwhile, using  fixed values of $\lambda$, $\rho_1$ and $\rho_2$ will not lead to a large drop of performance scores, so ASOR is robust to hyperparameter changes. ASOR's will have degraded performance if training datasets $D_{P}$ and $D_{Q}$ are improperly constructed, e.g., with random data partition, or without considering state values and visitation counts, i.e., with fixed $\rho_1=0$ or fixed $\rho_2=0$.

\begin{table*}[t]
\small
    \centering
    \caption{Experiment results in the fall guys-like game environment. Metrics with the up arrow ($\uparrow$) are expected to have larger values and vice versa. Metrics with $(\sim)$ have no specific tendencies.}
    \label{tab:exp_fall}
    \begin{tabular}{c|c|c|c|c|c|c|c}
\toprule
~ & \makecell{Total Reward \\ ($\uparrow$)} & \makecell{Goal Reward \\ ($\uparrow$)} & \makecell{Success Rate \\ ($\uparrow$)} & \makecell{Trapped Rate \\ ($\downarrow$)} & \makecell{Unnecessary \\ Jump Rate ($\downarrow$)} & \makecell{Distance from\\ Cliff ($\sim$)} & \makecell{Policy\\ Entropy ($\sim$)}\\
\midrule
PPO & 0.329$\pm$0.308 & 1.154$\pm$0.085 & 0.361$\pm$0.009 & 0.012$\pm$0.009 & 0.064$\pm$0.003 & 0.152$\pm$0.025 & 5.725$\pm$0.185\\
PPO+SRPO & 0.337$\pm$0.257 & 1.513$\pm$0.076 & 0.376$\pm$0.006 & 0.038$\pm$0.015 & 0.040$\pm$0.003 & 0.148$\pm$0.018 & 5.859$\pm$0.098\\
PPO+\algoe & \textbf{0.554}$\pm$0.336 & \textbf{1.781}$\pm$0.053 & \textbf{0.387}$\pm$0.005 & \textbf{0.005}$\pm$0.005 & \textbf{0.029}$\pm$0.003 & 0.143$\pm$0.021 & 6.358$\pm$0.122\\
\bottomrule
    \end{tabular}
\end{table*}

\subsection{Results in Online Continuous Control Tasks}
\label{sec:exp_online}
In online continuous control tasks, we explore other dimensions of dynamics shift, namely environment non-stationary and the continuous change of environment parameters. Such tasks are far more complicated than offline tasks with 3 different dynamics, but are within the capability of current approaches thanks to the existence of online interactive training environments. We consider the HalfCheetah, Walker2d, and Ant environments in the MuJoCo simulator~\citep{todorov2012mujoco} and the autonomous driving environment in the MetaDrive simulator~\citep{li2023metadrive}. Sources of dynamics change include wind, joint damping, and traffic densities. 
For baselines we include the IfO algorithm GARAT~\citep{desai2020imitation}, standard online RL algorithm SAC~\citep{haarnoja2018soft}, online cross-domain policy transfer algorithm OSI~\citep{yu2017preparing}, ESCP~\citep{luo2022adapt},  CaDM~\citep{lee2020context}, and SRPO~\citep{xue2023state}.

Comparative results in online continuous control tasks are shown in Fig.~\ref{fig:online_res}, where our ESCP+ASOR algorithm has the best performance in all environments. Specifically, it only makes marginal improvements in the HalfCheetah environment, in contrast to large enhancement in other environments. This is because the agent will not ``fall over'' in the HalfCheetah environment, and the state accessibility will not change a lot under dynamics shift, undermining the effect of the accessible state-based policy regularization. We also compare in Fig.~\ref{fig:s_prior_vis} (left) the augmented reward with the environment reward on different states in Walker-2d. On states where the agent is about to fall over, the augmented reward drops significantly while the environment reward does not change much, demonstrating the effectiveness of the reward augmentation.

\subsection{Results in A Large-Scale Fall Guys-Like Battle Royal Game Environment}
\label{sec:exp_koala}
\begin{figure}
   \includegraphics[width=1\linewidth]{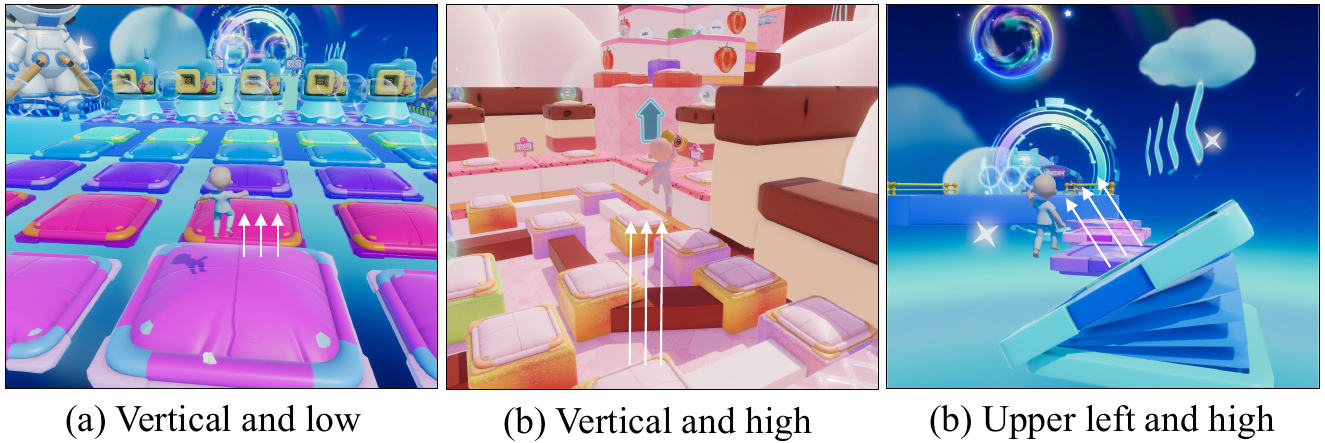}
    \vspace{-0.6cm}
    \caption{Demonstrations of dynamics shift caused by different trampoline effect. Colors and textures are only for visual enhancement and are not part of the agent's observations. }
    \vspace{-0.4cm}
   \label{fig:demo_fall_guys}
\end{figure}
In the large-scale fall guys-like game environment, we focus on highly dynamic and competitive race scenarios, characterized by a myriad of ever-changing obstacles, shifting floor layouts, and functional items. The elements within the game exhibit both functional and attribute changes, resulting in dynamics shift and evolving state accessibility. As shown in Fig.~\ref{fig:demo_fall_guys}, the effects of trampolines (e.g., height and orientation) vary across different maps
and the agent's interaction with the trampoline will therefore result in different environment transitions depending on the specific configuration. The resulting dynamics shift has high stochasticity and cannot be effectively modelled by context encoder-based algorithms~\citep{luo2022adapt,lee2020context}.
We train the agent on 10 distinct maps, each presenting unique challenges and configurations. The training step is set to 6M. Metrics except policy entropy were averaged over the final 1M steps and the policy entropy is averaged in the initial 1M steps. More experiment details are listed in Appendix~\ref{append:fall-guys}, including additional map demonstrations, MDP setups, and the network structure.

As demonstrated in Tab.~\ref{tab:exp_fall}, PPO+\algo achieves the highest scores in all five performance-related metrics. To be specific, the high total reward, unweighed goal reward, and success rate demonstrate the overall effectiveness of \algo when applied to complex large-scale tasks. Low trapped rate, small distance from cliff, and high policy entropy demonstrate the strong exploration ability of \algo since it is better at getting rid of low-reward regions and has higher policy stochasticity. The low unnecessary jump rate demonstrates the effectiveness of policy regularization only on accessible states. Jumping states may appear in the optimal trajectories in maps with diverse altitudes, but are unnecessary and hinder the fast goal reaching in other maps. Fig.~\ref{fig:s_prior_vis}~(right) shows the curve of the augmented reward and the extra loss, including the discriminator training loss and the RND training loss. The loss curve drops smoothly and the average augmented reward remains stable, which means that the discriminator network is easy to train and has stable performance.
\section{Conclusion}
\label{sec:conclusion}
In this paper, we focus on efficient policy optimization with data collected under dynamics shift. We demonstrate that despite widely utilized in IfO algorithms, the idea of similar expert state distribution across different dynamics can be unreliable when some states are no longer accessible as the environment dynamics changes. We propose a policy regularization method that only imitates expert state distributions on globally accessible states. By formally characterizing the difference of state accessibility under dynamics shift, we show that the accessible state-based regularization approach provides strong lower-bound performance guarantees for efficient policy optimization. We also propose a practical algorithm called \algo that can serve as an add-on reward augmentation module to existing RL approaches. Extensive experiments across various online and offline RL benchmarks
indicate \algo can be effectively integrated with several state-of-the-art cross-domain policy transfer algorithms, substantially enhancing their performance.

\vspace{-0.2cm}
\paragraph{Limitations} This paper focuses on the setting of HiP-MDP with evolving environment dynamics and a static reward function. The resulting \algo algorithm will not be applicable to tasks with multiple reward functions. Meanwhile, the theoretical results will be weaker on some adversarial HiP-MDPs with large $R_s$. Details will be discussed in Appendix~\ref{append:thm_discuss}.

\section*{Broader Impact} The ASOR approach enables better adaptation of RL models in dynamic environments, making RL applications more resilient across diverse real-world conditions. The concept of avoiding misleading inaccessible states may improve the safety of autonomous agents, ensuring that they do not make catastrophic decisions when transitioning between environments. Therefore, ASOR's method can enhance cross-domain policy transfer, making RL algorithms more efficient for applications such as robotics, industrial automation, healthcare, and finance.

\bibliography{example_paper}

\begin{thebibliography}{44}
\providecommand{\natexlab}[1]{#1}
\providecommand{\url}[1]{\texttt{#1}}
\expandafter\ifx\csname urlstyle\endcsname\relax
  \providecommand{\doi}[1]{doi: #1}\else
  \providecommand{\doi}{doi: \begingroup \urlstyle{rm}\Url}\fi

\bibitem[Arora et~al.(2017)Arora, Ge, Liang, Ma, and Zhang]{sanjeev2017generalization}
Arora, S., Ge, R., Liang, Y., Ma, T., and Zhang, Y.
\newblock Generalization and equilibrium in generative adversarial nets {(GANs)}.
\newblock In \emph{{ICML}}, 2017.

\bibitem[Burda et~al.(2019)Burda, Edwards, Storkey, and Klimov]{yuri2019exploration}
Burda, Y., Edwards, H., Storkey, A.~J., and Klimov, O.
\newblock Exploration by random network distillation.
\newblock In \emph{{ICLR}}, 2019.

\bibitem[Chen et~al.(2021)Chen, Yu, Li, Luo, Qin, Shang, and Ye]{chen2021offline}
Chen, X., Yu, Y., Li, Q., Luo, F., Qin, Z.~T., Shang, W., and Ye, J.
\newblock Offline model-based adaptable policy learning.
\newblock In \emph{NeurIPS}, 2021.

\bibitem[Chevalier-Boisvert et~al.(2023)Chevalier-Boisvert, Dai, Towers, de~Lazcano, Willems, Lahlou, Pal, Castro, and Terry]{MinigridMiniworld23}
Chevalier-Boisvert, M., Dai, B., Towers, M., de~Lazcano, R., Willems, L., Lahlou, S., Pal, S., Castro, P.~S., and Terry, J.
\newblock Minigrid \& miniworld: Modular \& customizable reinforcement learning environments for goal-oriented tasks.
\newblock \emph{CoRR}, abs/2306.13831, 2023.

\bibitem[Desai et~al.(2020)Desai, Durugkar, Karnan, Warnell, Hanna, and Stone]{desai2020imitation}
Desai, S., Durugkar, I., Karnan, H., Warnell, G., Hanna, J., and Stone, P.
\newblock An imitation from observation approach to transfer learning with dynamics mismatch.
\newblock In \emph{NeurIPS}, 2020.

\bibitem[Doshi{-}Velez \& Konidaris(2016)Doshi{-}Velez and Konidaris]{velez2016hidden}
Doshi{-}Velez, F. and Konidaris, G.~D.
\newblock Hidden parameter markov decision processes: {A} semiparametric regression approach for discovering latent task parametrizations.
\newblock In \emph{{IJCAI}}, 2016.

\bibitem[Eysenbach et~al.(2021)Eysenbach, Chaudhari, Asawa, Levine, and Salakhutdinov]{eysenbach2021off}
Eysenbach, B., Chaudhari, S., Asawa, S., Levine, S., and Salakhutdinov, R.
\newblock Off-dynamics reinforcement learning: Training for transfer with domain classifiers.
\newblock In \emph{{ICLR}}, 2021.

\bibitem[Fu et~al.(2020)Fu, Kumar, Nachum, Tucker, and Levine]{fu2020d4rl}
Fu, J., Kumar, A., Nachum, O., Tucker, G., and Levine, S.
\newblock {D4RL:} datasets for deep data-driven reinforcement learning.
\newblock \emph{CoRR}, abs/2004.07219, 2020.

\bibitem[Fujimoto et~al.(2019)Fujimoto, Meger, and Precup]{fujomoto2019off}
Fujimoto, S., Meger, D., and Precup, D.
\newblock Off-policy deep reinforcement learning without exploration.
\newblock In \emph{{ICML}}, 2019.

\bibitem[Gangwani \& Peng(2020)Gangwani and Peng]{gangwni2020state}
Gangwani, T. and Peng, J.
\newblock State-only imitation with transition dynamics mismatch.
\newblock In \emph{{ICLR}}, 2020.

\bibitem[Goodfellow et~al.(2014)Goodfellow, Pouget{-}Abadie, Mirza, Xu, Warde{-}Farley, Ozair, Courville, and Bengio]{goodfellow2014generative}
Goodfellow, I.~J., Pouget{-}Abadie, J., Mirza, M., Xu, B., Warde{-}Farley, D., Ozair, S., Courville, A.~C., and Bengio, Y.
\newblock Generative adversarial networks.
\newblock \emph{CoRR}, abs/1406.2661, 2014.

\bibitem[Haarnoja et~al.(2018)Haarnoja, Zhou, Abbeel, and Levine]{haarnoja2018soft}
Haarnoja, T., Zhou, A., Abbeel, P., and Levine, S.
\newblock Soft actor-critic: Off-policy maximum entropy deep reinforcement learning with a stochastic actor.
\newblock In \emph{{ICML}}, 2018.

\bibitem[Ho \& Ermon(2016)Ho and Ermon]{jonothan2016generative}
Ho, J. and Ermon, S.
\newblock Generative adversarial imitation learning.
\newblock In \emph{{NIPS}}, 2016.

\bibitem[Janner et~al.(2019)Janner, Fu, Zhang, and Levine]{janner2019when}
Janner, M., Fu, J., Zhang, M., and Levine, S.
\newblock When to trust your model: Model-based policy optimization.
\newblock In \emph{NeurIPS}, 2019.

\bibitem[Jiang et~al.(2020)Jiang, Pang, and Yu]{jiang2020offline}
Jiang, S., Pang, J., and Yu, Y.
\newblock Offline imitation learning with a misspecified simulator.
\newblock In \emph{NeurIPS}, 2020.

\bibitem[Kumar et~al.(2020)Kumar, Zhou, Tucker, and Levine]{kumar2020conservative}
Kumar, A., Zhou, A., Tucker, G., and Levine, S.
\newblock Conservative {Q}-learning for offline reinforcement learning.
\newblock In \emph{NeurIPS}, 2020.

\bibitem[Lee et~al.(2020)Lee, Seo, Lee, Lee, and Shin]{lee2020context}
Lee, K., Seo, Y., Lee, S., Lee, H., and Shin, J.
\newblock Context-aware dynamics model for generalization in model-based reinforcement learning.
\newblock In \emph{{ICML}}, 2020.

\bibitem[Levine(2018)]{levine2018reinforcement}
Levine, S.
\newblock Reinforcement learning and control as probabilistic inference: Tutorial and review.
\newblock \emph{CoRR}, abs/1805.00909, 2018.

\bibitem[Li et~al.(2023)Li, Peng, Feng, Zhang, Xue, and Zhou]{li2023metadrive}
Li, Q., Peng, Z., Feng, L., Zhang, Q., Xue, Z., and Zhou, B.
\newblock Metadrive: Composing diverse driving scenarios for generalizable reinforcement learning.
\newblock \emph{{IEEE} Trans. Pattern Anal. Mach. Intell.}, 2023.

\bibitem[Liang et~al.(2018)Liang, Liaw, Nishihara, Moritz, Fox, Goldberg, Gonzalez, Jordan, and Stoica]{liang2018rllib}
Liang, E., Liaw, R., Nishihara, R., Moritz, P., Fox, R., Goldberg, K., Gonzalez, J., Jordan, M., and Stoica, I.
\newblock {RLlib}: Abstractions for distributed reinforcement learning.
\newblock In \emph{ICML}, 2018.

\bibitem[Lillicrap et~al.(2016)Lillicrap, Hunt, Pritzel, Heess, Erez, Tassa, Silver, and Wierstra]{timothy2016continuous}
Lillicrap, T.~P., Hunt, J.~J., Pritzel, A., Heess, N., Erez, T., Tassa, Y., Silver, D., and Wierstra, D.
\newblock Continuous control with deep reinforcement learning.
\newblock In \emph{{ICLR}}, 2016.

\bibitem[Liu et~al.(2020)Liu, Ling, Mu, and Su]{liu2020state}
Liu, F., Ling, Z., Mu, T., and Su, H.
\newblock State alignment-based imitation learning.
\newblock In \emph{{ICLR}}, 2020.

\bibitem[Liu et~al.(2022)Liu, Zhang, and Wang]{liu2022dara}
Liu, J., Zhang, H., and Wang, D.
\newblock {DARA:} {D}ynamics-aware reward augmentation in offline reinforcement learning.
\newblock In \emph{{ICLR}}, 2022.

\bibitem[Liu et~al.(2021)Liu, Xue, Pang, Jiang, Xu, and Yu]{liu2021regret}
Liu, X., Xue, Z., Pang, J., Jiang, S., Xu, F., and Yu, Y.
\newblock Regret minimization experience replay in off-policy reinforcement learning.
\newblock In \emph{NeurIPS}, 2021.

\bibitem[Luo et~al.(2022)Luo, Jiang, Yu, Zhang, and Zhang]{luo2022adapt}
Luo, F., Jiang, S., Yu, Y., Zhang, Z., and Zhang, Y.
\newblock Adapt to environment sudden changes by learning a context sensitive policy.
\newblock In \emph{{AAAI}}, 2022.

\bibitem[Nachum et~al.(2019)Nachum, Chow, Dai, and Li]{nachum2019dualdice}
Nachum, O., Chow, Y., Dai, B., and Li, L.
\newblock {DualDICE}: Behavior-agnostic estimation of discounted stationary distribution corrections.
\newblock In \emph{NeurIPS}, 2019.

\bibitem[Niu et~al.(2022)Niu, Sharma, Qiu, Li, Zhou, Hu, and Zhan]{niu2022when}
Niu, H., Sharma, S., Qiu, Y., Li, M., Zhou, G., Hu, J., and Zhan, X.
\newblock When to trust your simulator: Dynamics-aware hybrid offline-and-online reinforcement learning.
\newblock \emph{CoRR}, abs/2206.13464, 2022.

\bibitem[Niu et~al.(2023)Niu, Ji, Liu, Zhao, Zhu, Zheng, Huang, Zhou, Hu, and Zhan]{niu2023h2oplus}
Niu, H., Ji, T., Liu, B., Zhao, H., Zhu, X., Zheng, J., Huang, P., Zhou, G., Hu, J., and Zhan, X.
\newblock {H2O+:} an improved framework for hybrid offline-and-online {RL} with dynamics gaps.
\newblock \emph{CoRR}, abs/2309.12716, 2023.

\bibitem[Niu et~al.(2024)Niu, Hu, Zhou, and Zhan]{niu2024comprehensive}
Niu, H., Hu, J., Zhou, G., and Zhan, X.
\newblock A comprehensive survey of cross-domain policy transfer for embodied agents.
\newblock \emph{CoRR}, abs/2402.04580, 2024.

\bibitem[Radosavovic et~al.(2021)Radosavovic, Wang, Pinto, and Malik]{ilija2021state}
Radosavovic, I., Wang, X., Pinto, L., and Malik, J.
\newblock State-only imitation learning for dexterous manipulation.
\newblock In \emph{{IROS}}, 2021.

\bibitem[Schulman et~al.(2017)Schulman, Wolski, Dhariwal, Radford, and Klimov]{schulman2017proximal}
Schulman, J., Wolski, F., Dhariwal, P., Radford, A., and Klimov, O.
\newblock Proximal policy optimization algorithms.
\newblock \emph{CoRR}, abs/1707.06347, 2017.

\bibitem[Sinha et~al.(2022)Sinha, Song, Garg, and Ermon]{sinha2022experience}
Sinha, S., Song, J., Garg, A., and Ermon, S.
\newblock Experience replay with likelihood-free importance weights.
\newblock In \emph{{L4DC}}, 2022.

\bibitem[Sutton \& Barto(1998)Sutton and Barto]{sutton1998reinforce}
Sutton, R.~S. and Barto, A.~G.
\newblock Reinforcement learning: An introduction.
\newblock \emph{{IEEE} Trans. Neural Networks}, 1998.

\bibitem[Todorov et~al.(2012)Todorov, Erez, and Tassa]{todorov2012mujoco}
Todorov, E., Erez, T., and Tassa, Y.
\newblock Mujoco: {A} physics engine for model-based control.
\newblock In \emph{{IROS}}, 2012.

\bibitem[Torabi et~al.(2018{\natexlab{a}})Torabi, Warnell, and Stone]{faraz2018behavioral}
Torabi, F., Warnell, G., and Stone, P.
\newblock Behavioral cloning from observation.
\newblock In \emph{{IJCAI}}, 2018{\natexlab{a}}.

\bibitem[Torabi et~al.(2018{\natexlab{b}})Torabi, Warnell, and Stone]{faraz2018generative}
Torabi, F., Warnell, G., and Stone, P.
\newblock Generative adversarial imitation from observation.
\newblock \emph{CoRR}, abs/1807.06158, 2018{\natexlab{b}}.

\bibitem[Wu et~al.(2019)Wu, Tucker, and Nachum]{wu2019behavior}
Wu, Y., Tucker, G., and Nachum, O.
\newblock Behavior regularized offline reinforcement learning.
\newblock \emph{CoRR}, abs/1911.11361, 2019.

\bibitem[Xu et~al.(2023)Xu, Bai, Ma, Wang, Zhao, Wang, Li, and Li]{xu2023cross}
Xu, K., Bai, C., Ma, X., Wang, D., Zhao, B., Wang, Z., Li, X., and Li, W.
\newblock Cross-domain policy adaptation via value-guided data filtering.
\newblock In \emph{NeurIPS}, 2023.

\bibitem[Xu et~al.(2020)Xu, Li, and Yu]{xu2020error}
Xu, T., Li, Z., and Yu, Y.
\newblock Error bounds of imitating policies and environments.
\newblock In \emph{NeurIPS}, 2020.

\bibitem[Xue et~al.(2023{\natexlab{a}})Xue, Cai, Liu, Zheng, Jiang, Gai, and An]{xue2023state}
Xue, Z., Cai, Q., Liu, S., Zheng, D., Jiang, P., Gai, K., and An, B.
\newblock State regularized policy optimization on data with dynamics shift.
\newblock In \emph{NeurIPS}, 2023{\natexlab{a}}.

\bibitem[Xue et~al.(2023{\natexlab{b}})Xue, Peng, Li, Liu, and Zhou]{xue2023guarded}
Xue, Z., Peng, Z., Li, Q., Liu, Z., and Zhou, B.
\newblock Guarded policy optimization with imperfect online demonstrations.
\newblock In \emph{{ICLR}}, 2023{\natexlab{b}}.

\bibitem[Yu et~al.(2020)Yu, Thomas, Yu, Ermon, Zou, Levine, Finn, and Ma]{yu2020mopo}
Yu, T., Thomas, G., Yu, L., Ermon, S., Zou, J.~Y., Levine, S., Finn, C., and Ma, T.
\newblock {MOPO:} {M}odel-based offline policy optimization.
\newblock In \emph{NeurIPS}, 2020.

\bibitem[Yu et~al.(2017)Yu, Tan, Liu, and Turk]{yu2017preparing}
Yu, W., Tan, J., Liu, C.~K., and Turk, G.
\newblock Preparing for the unknown: Learning a universal policy with online system identification.
\newblock In \emph{RSS}, 2017.

\bibitem[Zintgraf et~al.(2020)Zintgraf, Shiarlis, Igl, Schulze, Gal, Hofmann, and Whiteson]{luisa2020varibad}
Zintgraf, L.~M., Shiarlis, K., Igl, M., Schulze, S., Gal, Y., Hofmann, K., and Whiteson, S.
\newblock {VariBAD}: {A} very good method for bayes-adaptive deep {RL} via meta-learning.
\newblock In \emph{{ICLR}}, 2020.

\end{thebibliography}
\bibliographystyle{icml2025}

\newpage
\appendix
\onecolumn
\section{Additional Derivations and Proofs}
\subsection{Derivations of the Lagrangian}
\label{append:lagrangian}
For expression convenience, we denote $d^\pi_T(\cdot)$ with $\mu(\cdot)$ and $d_{T_0}^{*,+}(\cdot)$ with $\nu(\cdot)$. We also omit the maximization over $T$ in Eq.~(\ref{eq:f_opt_prob}) as it can be obtained by following all policy constrains in different dynamics. We start from the optimization problem
\begin{equation}
\label{eq:opt_prob_append}
    \begin{aligned}
    &\max_\pi~\mathbb{E}_{s_t, a_t, s_{t+1}\sim \tau_\pi} \left[\sum_{t=0}^{\infty} \gamma^t r\left(s_t, a_t,s_{t+1}\right)\right]\quad \text{ s.t. }~d_{\mathcal{F}, \phi}\left(\mu(\cdot),\nu(\cdot)\right)<\varepsilon.
    \end{aligned}
\end{equation} 
The $\mathcal F$-distance term can be transformed as:
\begin{equation}
\begin{aligned}
d_{\mathcal{F}, \phi}\left(\mu(\cdot),\nu(\cdot)\right)&=\mathbb E_{s\sim\mu}[\log\omega^*(s)]+\mathbb E_{s\sim\nu}[\log(1-\omega^*(s))]\\
&=\int\mu(s)\log\omega^*(s)ds+\int\nu(s)\log(1-\omega^*(s))ds\\
&=\int(1-\gamma)\sum\limits_{t=0}^\infty\gamma^tp(s_t=s|\pi,T)\log\omega^*(s)ds\\
&\qquad\qquad+\int(1-\gamma)\sum\limits_{t=0}^\infty\gamma^tp(s_t=s,s_t\in S^+|\pi^*,T_0)\log(1-\omega^*(s))ds\\
&=(1-\gamma)\sum\limits_{t=0}^\infty\gamma^t\left[\mathbb E_{s_t\sim\tau_\pi}\log\omega^*(s_t)+\mathbb E_{s_t\sim\tau^{*,+}}\log(1-\omega^*(s_t))\right]
\end{aligned}
\end{equation}
where $\omega^*$ is the trained discriminator.
So 
the optimization problem can be written as the following standard form
\begin{equation}
    \begin{aligned}
    &\min_\pi~\mathbb{E}_{s_t, a_t,s_{t+1}\sim \tau_\pi} \sum_{t=0}^{\infty} -\gamma^t r\left(s_t, a_t, s_{t+1}\right)\\
    &\text{  s.t. }~~-(1-\gamma)\sum\limits_{t=0}^\infty\gamma^t\left[\mathbb E_{s_t\sim\tau_\pi}\log\omega^*(s_t)+\mathbb E_{s_t\sim\tau^{*,+}}\log(1-\omega^*(s_t))\right]-\frac{\varepsilon}{1-\gamma}<0.
    \end{aligned}
\end{equation}
Neglecting items irrelevant to $\pi$, we get the Lagrangian $L$ as
\begin{equation}
    L=-\mathbb E_{s_t,a_t,s_{t+1}\sim\tau_\pi}\left[\sum_{t=0}^\infty\gamma^t\left(r(s_t,a_t,s_{t+1})+\lambda \log\omega^*(s_t)\right)\right]-\frac{\lambda\varepsilon}{1-\gamma}.
\end{equation}

\subsection{Proofs of Theorems}
\label{append:proof_thm}
\begin{lemma}[Value Discrepancy]
\label{thm: v_diff_append}
   Considering MDPs $\mathcal M_1=(\mathcal{S}, \mathcal{A}, T_1, r, \gamma, \rho_0)$ and $\mathcal M_2=(\mathcal{S}, \mathcal{A}, T_2, r, \gamma, \rho_0)$ which are \mra from each other,
   for all $s\in\mathcal S$ we have 
   \begin{equation}
     |V^*_{T_1}(s)-V^*_{T_2}(s)|\leqslant\frac{R_s+2\lambda }{1-\gamma},
   \end{equation}
   where $\lambda$ is the action coefficient in the reward function. Detailed definition are in Sec.~\ref{sec:pre}.
\end{lemma}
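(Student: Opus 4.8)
The plan is to prove a one-sided bound and then invoke symmetry. Since $\mathcal M_1$ and $\mathcal M_2$ are \mra from each other, it suffices to show $V^*_{T_1}(s)-V^*_{T_2}(s)\leqslant \tfrac{R_s+2\lambda}{1-\gamma}$ for every $s$; exchanging the roles of $T_1$ and $T_2$ then gives the reverse inequality, and the two combine into the stated absolute-value bound. So I would fix an arbitrary $s$ and concentrate on lower-bounding $V^*_{T_2}(s)$ by $V^*_{T_1}(s)$ minus the target error.

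The core construction is to \emph{simulate} the $\mathcal M_1$-optimal behavior inside $\mathcal M_2$. I would take an optimal policy $\pi^*_{T_1}$ and, for each one-step transition $(s_t,a_t,s_{t+1})$ that it induces with $T_1(s_{t+1}\mid s_t,a_t)>0$, apply the accessibility property to replace it by a length-$N_t\leqslant M$ path $s_t=s_0,s_1,\dots,s_{N_t}=s_{t+1}$ that has positive probability under $T_2$ and satisfies the reward-discrepancy clause. Concatenating these detours yields a trajectory realizable in $\mathcal M_2$, and hence a policy $\tilde\pi$ with $V^*_{T_2}(s)\geqslant V^{\tilde\pi}_{T_2}(s)$. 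The task then reduces to showing that the discounted return collected by $\tilde\pi$ falls short of $V^*_{T_1}(s)$ by at most $\tfrac{R_s+2\lambda}{1-\gamma}$.

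I would establish this by telescoping the return transition-by-transition. For each replaced transition, the interior rewards $\sum_{n=1}^{N_t-1}\gamma^{n-1}r(s_n,a_n,s_{n+1})$ together with the $(1-\gamma^{N_t-1})V^*_{T_1}(s_0)$ correction are exactly the quantity bounded by $R_s$ in the definition, so each detour contributes at most $R_s$ of discrepancy relative to the corresponding one-step $\mathcal M_1$ backup; the residual $2\lambda$ comes from bounding the reward difference caused by the action chosen in $\mathcal M_2$ differing from the $\mathcal M_1$-optimal action, using that $r(s,\cdot,s')$ is $\lambda$-Lipschitz over $a\in[-1,1]$ (diameter $2$). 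Summing these per-transition errors along the infinite-horizon trajectory produces the geometric factor $\tfrac{1}{1-\gamma}$, giving the claimed bound.

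The hard part will be the bookkeeping of the discount factors across variable-length detours—matching the $\gamma^{n-1}$ indexing in the definition with the discounting in the concatenated $\mathcal M_2$-trajectory so that the $(1-\gamma^{N_t-1})V^*_{T_1}$ term cleanly cancels the over/undercounted continuation value. A second, more delicate point is that accessibility only guarantees a \emph{positive-probability} path in $\mathcal M_2$ rather than a deterministic one; to keep the transition probabilities from leaking into the final constant, I would either carry out the comparison along the realized path and fold the reachability into the reward-discrepancy term, or phrase the argument purely at the level of achievable returns, invoking $V^*_{T_2}\geqslant V^{\tilde\pi}_{T_2}$ so that only the worst-case per-detour discrepancy enters the bound.
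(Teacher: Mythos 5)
Your proposal matches the paper's proof in all essentials: the paper likewise reduces to a one-sided bound (WLOG $V^*_{T_1}(s)\geqslant V^*_{T_2}(s)$), replaces the optimal one-step transition in $\mathcal M_1$ by the length-$N\leqslant M$ detour guaranteed by \mra accessibility, charges each detour $R_s$ via the definitional reward-discrepancy clause plus $2\lambda$ via the Lipschitz property, and obtains the $\tfrac{1}{1-\gamma}$ factor by iterating the resulting recursion $V^*_{T_1}(s)-V^*_{T_2}(s)\leqslant R_s+2\lambda+\gamma[V^*_{T_1}(s')-V^*_{T_2}(s')]$ — exactly your telescoping, phrased as a Bellman-equation recursion rather than an explicit simulating policy $\tilde\pi$. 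The bookkeeping concerns you flag (discount indexing across variable-length detours, stochastic rather than deterministic paths) are real but are handled no more carefully in the paper's own derivation, so your plan is faithful to it.
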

\begin{proof}
Without the loss of generality, we consider the state $s$ with $V^*_{T_1}(s)\geqslant V^*_{T_2}(s)$. Under the optimal policy $\pi^*_1(s)$, the next state of $s$ in $\mathcal M_1$ will be $s'=T(s,a^*)$. As $\mathcal M_2$ is \mra from $\mathcal M_1$, there exists $N\leqslant M$ such that in $\mathcal M_2$, $s'$ can be reached from $s$ with action sequence $a_1, a_2, \cdots, a_N$. We then borrow the idea of iteratively computing $|V^*_{T_1}(s)-V^*_{T_2}(s)|$ from Xue et al.~\citep{xue2023state}. According to the optimistic Bellman equation
    \begin{equation}
            V^*_T(s)=\max\limits_a~r(s, a,s')+\gamma V^*_T(T(s,a)),
    \end{equation}
    we have
    \begin{equation}
        \begin{aligned}
            &\left|V^*_{T_1}(s)-V^*_{T_2}(s)\right|\\
            =~&V^*_{T_1}(s)-V^*_{T_2}(s)\\
            \leqslant~& r(s,a^*,s')+\gamma V^*_{T_1}(s')-\sum_{i=0}^{N-1} \gamma^{i}r(s_i,a_i,s_{i+1})-\gamma^N V^*_{T_2}(s')\\
    &(s_0\doteq s,~s_{N}\doteq s'\text{ for brevity})\\
    =~&r(s,a^*,s')-\gamma^{N-1}r(s_{N-1},a_{N-1},s')+\gamma(1-\gamma^{N-1})V^*_{T_2}(s')\\&+\sum_{n=0}^{N-2} \gamma^{n}r(s_{n}, a_{n},s_{n+1})+\gamma[V^*_{T_1}(s')-V^*_{T_2}(s')]\\
    \leqslant~&(1-\gamma^{N-1})(r(s,a^*,s')+\gamma V^*_{T_2}(s'))+2\lambda+\sum_{n=0}^{N-2} \gamma^{n}r(s_{n}, a_{n},s_{n+1})+\gamma[V^*_{T_1}(s')-V^*_{T_2}(s')]\\
    \leqslant~&(1-\gamma^{N-1})(r(s,a^*,s')+\gamma V^*_{T_1}(s'))+2\lambda+\sum_{n=0}^{N-2} \gamma^{n}r(s_{n}, a_{n},s_{n+1})+\gamma[V^*_{T_1}(s')-V^*_{T_2}(s')]\\
    \leqslant~&R_s+2\lambda+\gamma[V^*_{T_1}(s')-V^*_{T_2}(s')].
        \end{aligned}
    \end{equation}
    Iteratively computing $\left|V^*_{T_1}(s)-V^*_{T_2}(s)\right|$, we have
    \begin{equation}
    \begin{aligned}
        \left|V^*_{T_1}(s')-V^*_{T_2}(s')\right|&\leqslant \frac{R_s+2\lambda}{1-\gamma}.\\
    \end{aligned}
    \end{equation}
\end{proof}

\begin{theorem}[Thm.~\ref{thm:main_1} in the main paper.]
\label{thm: eta_diff_append}
    Consider the MDP $\mathcal M_1=(\mathcal{S}, \mathcal{A}, T_1, r, \gamma, \rho_0)$ which is \mra from the MDP $\mathcal M_2=(\mathcal{S}, \mathcal{A}, T_2, r, \gamma, \rho_0)$. For all policy $\hat{\pi}$, if there exists one certain dynamics $T_0$ such that $\max\limits_{T} D_{\mathrm{JS}}(d^{\hat{\pi}}_{T}(\cdot)\|d^{*,+}_{T_0}(\cdot))\leqslant\varepsilon$, we have
 \begin{equation}
\label{eq:return_diff_append}
    \eta(\hat{\pi})\geqslant\max\limits_T\eta(\pi^*_T)-\dfrac{2R_s+6\lambda+2R_{\max}\varepsilon}{1-\gamma}.
\end{equation}
\end{theorem}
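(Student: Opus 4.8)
The plan is to lower-bound $\eta_{T_\theta}(\hat\pi)$ for each hidden parameter $\theta$ separately and then average, writing $\eta_T(\pi)=\mathbb{E}_{\pi,T}[\sum_{t}\gamma^t r]$ so that $\eta(\pi)=\mathbb{E}_\theta \eta_{T_\theta}(\pi)$. The natural first move is the occupancy representation $\eta_T(\pi)=\tfrac{1}{1-\gamma}\,\mathbb{E}_{s\sim d^\pi_T,\,a\sim\pi,\,s'\sim T}[r(s,a,s')]$, which recasts the performance gap as a comparison of occupancy-weighted rewards and lets the hypothesis $\max_\theta D_{\mathrm{JS}}(d^{\hat\pi}_{T_\theta}\|d^{*,+}_{T_0})\le\varepsilon$, valid for every $\theta$, enter through the state distributions.

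I would organize the bound into two stages. In the first (cross-dynamics) stage I invoke the Value Discrepancy Lemma, $|V^*_{T_1}(s)-V^*_{T_2}(s)|\le\tfrac{R_s+2\lambda}{1-\gamma}$, integrated against $\rho_0$, to show that all the ``optimal'' reference returns---$\max_\theta\eta(\pi^*_{T_\theta})$, the expert's own return $\eta_{T_0}(\pi^*_{T_0})=\mathbb{E}_{\rho_0}V^*_{T_0}$, and the optimal returns under every other dynamics---coincide up to $O(\tfrac{R_s+\lambda}{1-\gamma})$. Chaining this transport through the two dynamics involved in each comparison is what I expect to account for the $\tfrac{2R_s}{1-\gamma}$ term together with four of the six units of $\lambda$.

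In the second (state-matching) stage I would connect $\mathbb{E}_{\rho_0}V^*_{T_0}$ to $\eta(\hat\pi)$. Here I first replace the occupancy-weighted reward integrands---which differ because $\hat\pi$ acts differently from the expert and runs under $T_\theta\neq T_0$---by a common bounded witness, using the $\lambda$-Lipschitz-in-action property of $r(s,a,s')$ to absorb the action mismatch (the remaining $2\lambda$) and the fact that $d^{*,+}_{T_0}$ is supported on the globally accessible set $S^+$ to guarantee that every matched state is reachable under $T_\theta$. With a shared witness bounded in magnitude by $R_{\max}/(1-\gamma)$, the difference of its expectations under $d^{\hat\pi}_{T_\theta}$ and $d^{*,+}_{T_0}$ is at most $\tfrac{2R_{\max}}{1-\gamma}$ times the total-variation distance between the two occupancies, which the JS-divergence constraint bounds, producing exactly the $\tfrac{2R_{\max}\varepsilon}{1-\gamma}$ term. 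Averaging over $\theta$ and taking the best reference dynamics then assembles the claimed inequality.

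The hard part is the conversion performed in the second stage: occupancy closeness alone does not lower-bound the return, since it certifies only that $\hat\pi$ visits expert-like states, not that it acts well on them or that the rewards it collects match across differing dynamics. The whole argument rests on the two structural assumptions working in tandem---reward Lipschitzness in the action and the \mra property: the former makes the per-state reward essentially insensitive to which admissible action realizes a given transition, while the latter caps at $R_s$ the reward deficit incurred over the at-most-$M$ detour steps needed to reproduce a matched state in a shifted dynamics. Carrying out this per-state (rather than per-rollout) accounting correctly is both the delicate bookkeeping of the proof and the reason the final bound scales linearly, rather than quadratically, in the horizon $\tfrac{1}{1-\gamma}$.
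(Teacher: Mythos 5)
Your overall architecture is the same as the paper's: a triangle-inequality chain through intermediate reference returns, with the value-discrepancy lemma supplying a cross-dynamics cost of $\tfrac{R_s+2\lambda}{1-\gamma}$, a state-wise coupling of action marginals plus reward Lipschitzness supplying a $\tfrac{2\lambda}{1-\gamma}$ term, and an occupancy-to-return conversion supplying $\tfrac{2R_{\max}\varepsilon}{1-\gamma}$. The genuine gap is in where you budget the second $R_s+2\lambda$. You assign it to ``chaining the transport through the two dynamics,'' i.e.\ to a second application of the cross-dynamics value-discrepancy lemma. But in the paper's proof only one $(R_s+2\lambda)$ comes from that lemma (Eq.~\eqref{eq:triangle_first}); the other comes from a comparison your plan never sets up: between the unrestricted optimal return $\eta_{T_2}(\pi^*_{T_2})$ and the return $\eta^+_{T_2}(\pi^{+,*}_{T_2})$ of the optimal policy \emph{restricted to transitions among globally accessible states}, bounded by iterating the detour argument to get $\tfrac{R_s+2\lambda}{1-\gamma^M}\leqslant\tfrac{R_s+2\lambda}{1-\gamma}$ (Eq.~\eqref{eq:triangle_last}). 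This step is unavoidable because the regularization target is the renormalized restricted distribution $d^{*,+}_{T_0}$, not $d^{*}_{T_0}$: matching it certifies closeness only to the accessible-state-restricted reference, and the value-discrepancy lemma --- which compares \emph{unrestricted} optimal values across dynamics --- cannot close the restricted-versus-unrestricted gap no matter how many times you chain it. Your closing paragraph does invoke the correct mechanism (the $R_s$-capped detours), but deploys it as part of cross-dynamics transport, where the lemma already pays for it, leaving the within-dynamics restriction cost uncontrolled in your explicit budget ($2R_s+4\lambda$ in stage 1; $2\lambda+2R_{\max}\varepsilon$ in stage 2).

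Two smaller points. First, for your ``common bounded witness'' step to mean anything, the expectation under $d^{*,+}_{T_0}$ must be identified with the discounted return of an actual policy; the paper does this by positing a policy $\tilde\pi$ with $d^{\tilde\pi}_{T_1}=d^{*,+}_{T_0}$ (its existence resting on the $M$-$R_s$ accessibility between the MDPs) together with $\pi^{+,*}_{T_2}$, and then coupling their action marginals state-by-state via $\mu(A_1,A_2|s)$ to pay only $\tfrac{2\lambda}{1-\gamma}$ --- mere reachability of the matched states under $T_\theta$, which is all you invoke, does not suffice. Second, your claim that the JS constraint bounds the total-variation distance so as to produce ``exactly'' $\tfrac{2R_{\max}\varepsilon}{1-\gamma}$ is not right as stated: $D_{\mathrm{JS}}\leqslant\varepsilon$ yields only $\mathrm{TV}=\mathcal O(\sqrt{\varepsilon})$. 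The paper obtains the linear-in-$\varepsilon$ form by citing Lemma~6 of \citet{xu2020error} directly in Eq.~\eqref{eq:24} rather than routing through total variation; if you insist on the TV route you should either accept a $\sqrt{\varepsilon}$ rate or invoke that lemma as the paper does.
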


\begin{proof}
        $|\eta_{T_1}(\pi^*_{T_1})-\eta_{T_2}(\pi^*_{T_2})|$ can be bounded with Thm.~\ref{thm: v_diff_append}:
    \begin{equation}
    \label{eq:triangle_first}
        \begin{aligned}
            |\eta_{T_1}(\pi^*_{T_1})-\eta_{T_2}(\pi^*_{T_2})|&=\left|\mathbb E_{s\in\rho_0}V^*_{T_1}(s)-\mathbb{E}_{s\in\rho_0}V_{T_2}^{*}(s)\right|\leqslant\frac{R_s+2\lambda}{1-\gamma}.
        \end{aligned}
    \end{equation}
With a slight abuse of notation, we define the transition distribution $d^{\pi}_{T}(s,a,s')=d_{T}^{\pi}(s)\pi(a|s)T(s'|s,a)$ and the accessible-state transition distribution $d^{\pi,+}_{T_2}(s,a,s')=d_{T}^{\pi,+}(s)\hat{\pi}(a|s)T(s'|s,a)$. Consider $\tilde\pi$ such that $d^{\tilde\pi}_T(s)=d^{*,+}_{T_0}(s)$ for all $s\in\mathcal S$.
The accumulated return of policy $\tilde\pi$ under transition $T_1$ can be written as $\eta_{T_1}(\hat{\pi})=(1-\gamma)^{-1}\mathbb E_{s,a,s'\sim d^{\tilde\pi}_{T_1}}\left[r(s,a,s')\right]$.
We also consider the accumulated return of the optimal policy under transition $T_2$ including only accessible states: $\eta_{T_2}^+(\pi^{*,+}_{T_2})=(1-\gamma)^{-1}\mathbb E_{s,a,s'\sim d^{*,+}_{T_2}}\left[r(s,a,s')\right]$, where $\pi^{*,+}_{T_2}$ is the optimal policy making transitions among accessible states.
Consider the Lipschitz property of the reward function:
\begin{equation}
        |r(s,a_1,s')-r(s,a_2,s')|\leqslant \lambda\|a_1-a_2\|_1.
\end{equation}
Taking expectation w.r.t. $d^{\tilde\pi}_{T_1}(\cdot)$ on both sides, we get
\begin{equation}
        \mathbb E_{s\sim d^{\tilde\pi}_{T_1}}|r(s,a_1,s')-r(s,a_2,s')|\leqslant \mathbb E_{s\sim d^{\tilde\pi}_{T_1}} \lambda\|a_1-a_2\|_1.
\end{equation}
Letting $\mu(A_1,A_2|s)$ be any joint distribution with marginals $\hat{\pi}$ and $\pi^{*,+}_{T_2}$ conditioned on $s\in S^+$. Taking expectation w.r.t. $\mu$ on both sides, we get
\begin{equation}
\label{eq:18}
\begin{aligned}
       \left|\mathbb E_{d^{\hat{\pi}}_{T_1}}r(s,a,s')-\mathbb E_{d^{*,+}_{T_2}}r(s,a,s')\right|&\leqslant \mathbb E_{s\sim d^*_{T'}}\mathbb E_{a_1,a_2\sim\mu}|r(s,a_1,s')-r(s,a_2,s')|\\
       &\leqslant \lambda \mathbb E_{s\sim d^{\hat{\pi}}_{T_1}}E_\mu\|a_1-a_2\|_1\\
       &\leqslant \max\limits_s\lambda E_\mu\|a_1-a_2\|_1\\
       &\leqslant 2\lambda
\end{aligned}
\end{equation}
According to the definitions of $\eta_{T_1}(\tilde\pi)$ and $\eta^+_{T_2}(\pi^{+,*}_{T_2})$, the L.H.S. of Eq.~\eqref{eq:18} is exactly the difference of the two accumulated returns. Therefore, we get
\begin{equation}
\label{eq:triangle_middle}
    \begin{aligned}
        \left|\eta_{T_1}(\tilde\pi)-\eta^+_{T_2}(\pi^{*,+}_{T_2})\right|\leqslant\frac{2\lambda}{1-\gamma}.
    \end{aligned}
\end{equation}
Then we will compute the discrepancy between $\eta^+_{T_2}$ and $\eta_{T_2}$. $\eta_{T_2}$ can be computed with
\begin{equation}
    \begin{aligned}
                \eta_{T_2}(\pi^*_{T_2})&=\mathbb E_{s\sim\rho_0}V_{T_2}^{\pi^*_{T_2}}(s)\\
        &=\mathbb E_{\tau}\sum_{n=0}^{N-1}\gamma^nr(s_n,a_n,s_{n+1})+\gamma^NV_{T_2}^{\pi^*_{T_2}}(s_N),
    \end{aligned}
\end{equation}
where $s_N$ is the accessible state accessible from $s_0$ with $\pi^{*,+}_{T_2}$. According to the definition of \mra MDPs,
\begin{equation}
\label{eq:triangle_last}
    \begin{aligned}
        \eta_{T_2}(\pi^*_{T_2})&=\mathbb E_{\tau}\sum_{n=0}^{N-1}\gamma^nr_n+\gamma^NV_{T_2}^{\pi^*_{T_2}}(s_N)-R_s+R_s\\
        &\leqslant \mathbb E_{\tau}\sum_{n=0}^{N-1}\gamma^nr_n-\sum_{n=0}^{N-2}\gamma^nr_n-(\gamma^{N-1}-1)r(s_0,\pi^{+,*}_{T_2}(s_0),s_N)\\
        &\quad+\gamma^N V_{T_2}^{\pi^*_{T_2}}(s_N)-(\gamma^N-\gamma) V_{T_2}^{\pi^{+,*}_{T_2}}(s_N)+ R_s\\
        &\leqslant \mathbb E_{\tau} r(s_0,\pi^{+,*}_{T_2}(s_0),s_N)+\gamma V_{T_2}^{\pi^{+,*}_{T_2}}(s_N)+\gamma^N(V_{T_2}^{\pi^*_{T_2}}(s_N)-V_{T_2}^{\pi^{+,*}_{T_2}}(s_N))+R_s+2\lambda \\
        &=\eta^+_{T_2}(\pi^{+,*}_{T_2})+\gamma^N(V_{T_2}^{\pi^*_{T_2}}(s_N)-V_{T_2}^{\pi^{+,*}_{T_2}}(s_N))+R_s+2\lambda,
    \end{aligned}
\end{equation}
where $r_n$ is the short for $r(s_n,a_n,s_{n+1})$. Iteratively scaling the value discrepancy between $\pi^*_{T_2}$ and $\pi^{+,*}_{T_2}$, we get
\begin{equation}
    \left|\eta_{T_2}(\pi^*_{T_2})-\eta^+_{T_2}(\pi^{+,*}_{T_2})\right|\leqslant\frac{R_s+2\lambda}{1-\gamma^M}\leqslant\frac{R_s+2\lambda}{1-\gamma}.
\end{equation}
According to results in imitation learning (Lem. 6 in \cite{xu2020error}), we have
\begin{equation}
\label{eq:24}
    |\eta_{T_1}(\tilde\pi)-\eta_{T_1}(\hat\pi)|\leqslant \frac{2R_{\max}\varepsilon}{1-\gamma}
\end{equation}
Combining Eq.~\eqref{eq:triangle_first}\eqref{eq:triangle_middle}\eqref{eq:triangle_last}\eqref{eq:24}, we have
\begin{equation}
    \begin{aligned}
        \left|\eta_{T_1}(\tilde\pi)-\eta_{T_1}(\pi^*_{T_1})\right|&\leqslant|\eta_{T_1}(\hat\pi)-\eta_{T_1}(\tilde\pi)|+\left|\eta_{T_1}(\tilde\pi)-\eta^+_{T_2}(\pi^{+,*}_{T_2})\right|\\
        &\qquad+\left|\eta^+_{T_2}(\pi^{+,*}_{T_2})-\eta_{T_2}(\pi^*_{T_2})\right|+\left|\eta_{T_2}(\pi^*_{T_2})-\eta_{T_1}(\pi^*_{T_1})\right|\\
        &\leqslant\frac{2R_s+6\lambda+2R_{\max}\varepsilon}{1-\gamma}.
    \end{aligned}
\end{equation}
Taking expectation with respect to all $T$ in the HiP-MDP concludes the proof.
\end{proof}


\begin{lemma}[Lemma 2 in Xu et. al~\citep{xu2020error}]
\label{lemma:net_dist}
Consider a network class set $\mathcal{P}$ with $\Delta$-bounded value functions, i.e., $|P(s)| \leq \Delta$, for all $s \in \mathcal{S}, P \in \mathcal{P}$. Given an expert policy $\pi_{\mathrm{E}}$ and an imitated policy $\pi_{\mathrm{I}}$ with $d_{\mathcal{P}}\left(\hat d^{\pi_{\mathrm{E}}}, \hat d^{\pi_1}\right)-\inf _{\pi \in \Pi} d_{\mathcal{P}}\left(\hat d^{\pi_{\mathrm{E}}}, \hat d^\pi\right) \leq \varepsilon_{\mathcal P}$, then $\forall \delta \in(0,1)$, with probability at least $1-\delta$, we have
\begin{equation}
d_{\mathcal{P}}\left(d^{\pi_{\mathrm{E}}}, d^{\pi_{\mathrm{I}}}\right) \leq \inf _{\pi \in \Pi} d_{\mathcal{P}}\left(\hat d^{\pi_{\mathrm{E}}}, \hat d^\pi\right)+2 \hat{\mathcal{R}}_{d^{\pi_{\mathrm{E}}}}^{(m)}(\mathcal{P})+2 \hat{\mathcal{R}}_{d^{\pi_{\mathrm{I}}}}^{(m)}(\mathcal{P})+12 \Delta \sqrt{\frac{\log (2 / \delta)}{m}}+\varepsilon_{\mathcal P} .
\end{equation}
\end{lemma}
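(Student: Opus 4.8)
The plan is to treat this as a standard integral-probability-metric generalization bound. Since $d_{\mathcal{P}}(\mu,\nu)=\sup_{P\in\mathcal P}\{\mathbb E_{\mu}[P]-\mathbb E_{\nu}[P]\}$ is an IPM, it obeys the triangle inequality irrespective of whether $\mathcal P$ is closed under negation. First I would insert the two empirical measures,
\[
d_{\mathcal P}(d^{\pi_{\mathrm{E}}},d^{\pi_{\mathrm{I}}})\leqslant d_{\mathcal P}(d^{\pi_{\mathrm{E}}},\hat d^{\pi_{\mathrm{E}}})+d_{\mathcal P}(\hat d^{\pi_{\mathrm{E}}},\hat d^{\pi_{\mathrm{I}}})+d_{\mathcal P}(\hat d^{\pi_{\mathrm{I}}},d^{\pi_{\mathrm{I}}}),
\]
so the target splits into a purely empirical distance plus two population-versus-empirical deviation terms. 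The middle term is handled at once by the hypothesis $d_{\mathcal P}(\hat d^{\pi_{\mathrm{E}}},\hat d^{\pi_{\mathrm{I}}})-\inf_{\pi}d_{\mathcal P}(\hat d^{\pi_{\mathrm{E}}},\hat d^{\pi})\leqslant\varepsilon_{\mathcal P}$, which lets me replace it by $\inf_{\pi}d_{\mathcal P}(\hat d^{\pi_{\mathrm{E}}},\hat d^{\pi})+\varepsilon_{\mathcal P}$, matching the first and last terms of the claimed bound.

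The core of the argument is bounding each deviation term, for instance $d_{\mathcal P}(d^{\pi_{\mathrm{E}}},\hat d^{\pi_{\mathrm{E}}})=\sup_{P\in\mathcal P}\big(\mathbb E_{d^{\pi_{\mathrm{E}}}}[P]-\tfrac1m\sum_i P(s^{(i)})\big)$, by a Rademacher-complexity estimate in the textbook manner. Because every $P$ is $\Delta$-bounded, replacing a single one of the $m$ samples perturbs this supremum by at most $2\Delta/m$, so McDiarmid's bounded-differences inequality gives, with high probability, a deviation from its expectation of order $\Delta\sqrt{\log(1/\delta)/m}$. A symmetrization step bounds the expected supremum by twice the population Rademacher complexity $\mathcal R^{(m)}(\mathcal P)$, and a second McDiarmid application passes from the population Rademacher complexity to its empirical counterpart $\hat{\mathcal R}^{(m)}_{d^{\pi_{\mathrm{E}}}}(\mathcal P)$ at the same $\sqrt{\log(1/\delta)/m}$ cost. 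Applying this to both deviation terms, taking a union bound over the two high-probability events to retain overall confidence $1-\delta$, and summing yields the $2\hat{\mathcal R}^{(m)}_{d^{\pi_{\mathrm{E}}}}(\mathcal P)+2\hat{\mathcal R}^{(m)}_{d^{\pi_{\mathrm{I}}}}(\mathcal P)+12\Delta\sqrt{\log(2/\delta)/m}$ portion of the bound.

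The only genuinely delicate part is the constant bookkeeping: the factor $12\Delta$ accumulates from the two deviation terms, each contributing one McDiarmid tail for its supremum and one for the empirical-versus-population Rademacher conversion, with the range $2\Delta$ of the functions entering each tail. Landing on exactly $12\Delta\sqrt{\log(2/\delta)/m}$ requires allocating the failure budget across these events and folding the resulting $\log$ factors into the stated $\log(2/\delta)$ form through mild over-counting. Everything else—the triangle inequality and the use of $\varepsilon_{\mathcal P}$—is immediate, so I expect essentially all of the work to lie in this concentration-and-symmetrization step, which is standard uniform-convergence machinery that I would invoke rather than re-derive from scratch.
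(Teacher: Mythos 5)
Your proposal is correct and matches the paper's treatment: the paper does not re-derive this lemma but simply defers to Appendix B.3 of \citet{xu2020error}, where the proof is exactly your argument — the directed triangle inequality through the two empirical measures, the $\varepsilon_{\mathcal P}$ near-optimality hypothesis for the middle term, and symmetrization plus McDiarmid-type Rademacher concentration for the two population-versus-empirical deviation terms, with the union-bound over-counting absorbed into the $12\Delta\sqrt{\log(2/\delta)/m}$ constant just as you describe. No gaps.
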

\begin{proof}
    See Appendix B.3 of~\cite{xu2020error}.
\end{proof}

\begin{theorem}
\label{thm:append_finite}
Consider the MDP $\mathcal M_1=(\mathcal{S}, \mathcal{A}, T_1, r, \gamma, \rho_0)$ which is \mra from the MDP $\mathcal M_2=(\mathcal{S}, \mathcal{A}, T_2, r, \gamma, \rho_0)$ 
and the network set $\mathcal P$ bounded by $\Delta$, i.e., $|P(s)|\leqslant\Delta$.
Given $\{s^{(i)}\}_{i=1}^m$ sampled from $d^{+,*}_{T_2}$,
if $\pi_{T_2}^{+,*}\in\mathcal P$ and the reward function $r_{\hat\pi,T_1}(s)=\mathbb E_{a\sim\hat\pi,s'\sim T_1}r(s,a,s')$ lies in the linear span of $\mathcal P$, for policy $\hat{\pi}$ regularized by $\hat{d}_{T_2}^{+,*}$ according to Eq.~\eqref{eq:f_opt_prob} with $d_{\mathcal P}(\hat d^{\hat\pi}_{T_1}, \hat{d}_{T_2}^{+,*})<\varepsilon_{\mathcal P}$, we have
    \begin{equation}
    \eta_{T_1}(\hat{\pi})\geqslant\eta_{T_1}(\pi_{T_1}^*)-\dfrac{2R_s+8\lambda}{1-\gamma}-\frac{2\|r\|_{\mathcal P}}{1-\gamma}\left(\hat{\mathcal R}^{(m)}_{d_{T_2}^{+,*}}(\mathcal P)+\hat{\mathcal R}^{(m)}_{d_{T_1}^{\hat\pi}}(\mathcal P)+6\Delta\sqrt{\frac{\log(2/\delta)}{m}}+\frac{\varepsilon}{2}\right)
    \end{equation}
     with probability at least $1-\delta$.
\end{theorem}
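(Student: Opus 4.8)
The plan is to reuse the triangle-inequality backbone of Thm.~\ref{thm: eta_diff_append}, replacing only the imitation term (previously Eq.~\eqref{eq:24}, which relied on the JS-divergence bound) with an argument tailored to the network distance. As before I would introduce the auxiliary policy $\tilde\pi$ whose $T_1$-stationary distribution matches the expert accessible distribution, $d^{\tilde\pi}_{T_1}=d^{*,+}_{T_2}$, and decompose
$$|\eta_{T_1}(\hat\pi)-\eta_{T_1}(\pi^*_{T_1})|\leqslant|\eta_{T_1}(\hat\pi)-\eta_{T_1}(\tilde\pi)|+|\eta_{T_1}(\tilde\pi)-\eta^+_{T_2}(\pi^{+,*}_{T_2})|+|\eta^+_{T_2}(\pi^{+,*}_{T_2})-\eta_{T_2}(\pi^*_{T_2})|+|\eta_{T_2}(\pi^*_{T_2})-\eta_{T_1}(\pi^*_{T_1})|.$$
The last three ``structural'' terms would be bounded exactly as in the infinite-sample proof: the value-discrepancy Lemma~\ref{thm: v_diff_append} controls $|\eta_{T_2}(\pi^*_{T_2})-\eta_{T_1}(\pi^*_{T_1})|$ by $\tfrac{R_s+2\lambda}{1-\gamma}$, the action-Lipschitz step controls the middle term by $\tfrac{2\lambda}{1-\gamma}$, and the accessibility-based iterative scaling controls $|\eta^+_{T_2}(\pi^{+,*}_{T_2})-\eta_{T_2}(\pi^*_{T_2})|$ by $\tfrac{R_s+2\lambda}{1-\gamma}$, together contributing $\tfrac{2R_s+6\lambda}{1-\gamma}$.

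The new work lies entirely in the imitation term $|\eta_{T_1}(\hat\pi)-\eta_{T_1}(\tilde\pi)|$. Writing returns as state-distribution expectations of the policy-averaged reward, $\eta_{T_1}(\pi)=(1-\gamma)^{-1}\mathbb E_{s\sim d^\pi_{T_1}}[r_{\pi,T_1}(s)]$, I would add and subtract $\mathbb E_{d^{*,+}_{T_2}}[r_{\hat\pi,T_1}]$ to split this term into (i) a comparison of $r_{\tilde\pi,T_1}$ with $r_{\hat\pi,T_1}$ under the common distribution $d^{*,+}_{T_2}$, which the action-Lipschitz property bounds by $\tfrac{2\lambda}{1-\gamma}$ (this is precisely the extra $2\lambda$ that upgrades the constant from $6\lambda$ to $8\lambda$), and (ii) a comparison of $d^{*,+}_{T_2}$ with $d^{\hat\pi}_{T_1}$ under the common reward $r_{\hat\pi,T_1}$. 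Since $r_{\hat\pi,T_1}$ lies in the linear span of $\mathcal P$, scaling it into a member of $\mathcal P$ lets me bound (ii) by $\tfrac{\|r\|_{\mathcal P}}{1-\gamma}\,d_{\mathcal P}(d^{*,+}_{T_2},d^{\hat\pi}_{T_1})$ via the supremum definition of the network distance.

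Finally I would invoke Lemma~\ref{lemma:net_dist} with $\pi_{\mathrm E},\pi_{\mathrm I}$ instantiated as the expert accessible distribution and $\hat\pi$, using the training constraint $d_{\mathcal P}(\hat d^{\hat\pi}_{T_1},\hat d^{+,*}_{T_2})<\varepsilon_{\mathcal P}$ to bound the infimum term. This converts the population network distance into the empirical constraint plus the two empirical Rademacher complexities $\hat{\mathcal R}^{(m)}_{d^{+,*}_{T_2}}(\mathcal P),\hat{\mathcal R}^{(m)}_{d^{\hat\pi}_{T_1}}(\mathcal P)$ and the $\Delta\sqrt{\log(2/\delta)/m}$ concentration term, all scaled by $\tfrac{\|r\|_{\mathcal P}}{1-\gamma}$. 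Collecting the structural $\tfrac{2R_s+8\lambda}{1-\gamma}$ and rearranging into a lower bound on $\eta_{T_1}(\hat\pi)$ then yields the claim with probability at least $1-\delta$.

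I expect the main obstacle to be step (ii): rigorously justifying that a network distance between \emph{state} distributions upper-bounds the \emph{return} gap. This hinges on the linear-span assumption on $r_{\hat\pi,T_1}$ and on carefully tracking the normalization factor $\|r\|_{\mathcal P}$ when the reward is realized through a scaled element of $\mathcal P$, as well as on verifying that $\hat\pi$ is an admissible competitor in the infimum of Lemma~\ref{lemma:net_dist} so that the empirical constraint $\varepsilon_{\mathcal P}$ can be substituted for that infimum.
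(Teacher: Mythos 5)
Your proposal matches the paper's own proof in essentially every step: the same auxiliary policy $\tilde\pi$ with $d^{\tilde\pi}_{T_1}=d^{*,+}_{T_2}$ and the structural bound $\tfrac{2R_s+6\lambda}{1-\gamma}$ inherited from the infinite-sample theorem, the same Lipschitz swap of $r_{\tilde\pi,T_1}$ for $r_{\hat\pi,T_1}$ contributing the extra $\tfrac{2\lambda}{1-\gamma}$ (hence $8\lambda$), the same linear-span argument bounding the return gap by $\tfrac{\|r\|_{\mathcal P}}{1-\gamma}\,d_{\mathcal P}\left(d^{\hat\pi}_{T_1},d^{\tilde\pi}_{T_1}\right)$, and the same use of Lemma~\ref{lemma:net_dist} with the empirical training constraint to produce the Rademacher and concentration terms. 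The proposal is correct and takes essentially the same approach as the paper.
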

\begin{proof}
    As $\mathcal M_1$ is \mra accessible from $\mathcal M_2$, there exists policy $\tilde\pi$ such that $d^{\tilde\pi}_{T_1}(s)=d^{*,+}_{T_2}(s)$ for all $s\in\mathcal S^+$. With Thm.~\ref{thm: eta_diff_append}, we have
    \begin{equation}
    \label{eq:25}
            \eta_{T_1}(\tilde{\pi})\geqslant\eta_{T_1}(\pi_{T_1}^*)-\dfrac{2R_s+6\lambda}{1-\gamma}
    \end{equation} 
    Then we compute the performance discrepancy $\eta_{T_1}(\hat\pi)-\eta_{T_1}(\tilde\pi)$ given that $d_{\mathcal P}(\hat d^{\hat\pi}_{T_1}, \hat{d}_{T_1}^{\tilde\pi})<\varepsilon_{\mathcal P}$.
    The following derivations borrow the main idea from Xu et al.~\citep{xu2020error} and turn the state-action occupancy measure $\rho$ into the state-only occupancy measure $d$. 
    We start with the network distance of the ground truth state occupancy measures.
    According to Lem.~\ref{lemma:net_dist}, we have
    \begin{equation}
    \label{eq:26}
        d_{\mathcal P}(d^{\hat\pi}_{T_1}, d_{T_1}^{\tilde\pi})\leqslant 2\hat{\mathcal R}^{(m)}_{d_{T_2}^{+,*}}(\mathcal P)+2\hat{\mathcal R}^{(m)}_{d_{T_1}^{\hat\pi}}(\mathcal P)+12\Delta\sqrt{\frac{\log(2/\delta)}{m}}+\varepsilon_{\mathcal P}
    \end{equation}
    with probability at least $1-\delta$. Meanwhile, 
    \begin{equation}
    \label{eq:27}
    \begin{aligned}
        &\left|\eta_{T_1}(\hat\pi)-\eta_{T_1}(\tilde\pi)\right|\\&\leqslant \frac{1}{1-\gamma}\bigg|\mathbb E_{s\sim d^{\hat\pi}_{T_1}}\left[r_{\hat\pi,T_1}(s)\right]-\mathbb E_{s\sim d^{\tilde\pi}_{T_1}}\left[r_{\tilde\pi,T_1}(s)\right]\bigg|\\
        &\leqslant \frac{1}{1-\gamma}\bigg|\mathbb E_{s\sim d^{\hat\pi}_{T_1}}\left[r_{\hat\pi,T_1}(s)\right]-\mathbb E_{s\sim d^{\tilde\pi}_{T_1}}\left[r_{\hat\pi,T_1}(s)\right]\bigg|+\frac{1}{1-\gamma}\bigg|\mathbb E_{s\sim d^{\tilde\pi}_{T_1}}\left[r_{\hat\pi,T_1}(s)\right]-\mathbb E_{s\sim d^{\tilde\pi}_{T_1}}\left[r_{\tilde\pi,T_1}(s)\right]\bigg|\\
        &\leqslant \frac{1}{1-\gamma}\bigg|\mathbb E_{s\sim d^{\hat\pi}_{T_1}}\left[r_{\hat\pi,T_1}(s)\right]-\mathbb E_{s\sim d^{\tilde\pi}_{T_1}}\left[r_{\hat\pi,T_1}(s)\right]\bigg|+\frac{2\lambda}{1-\gamma}.
    \end{aligned}
    \end{equation}
    As we assume that the reward function $r_{\hat\pi,T_1}(s)$ lies in the linear span of $\mathcal{P}$, there exists $n \in \mathbb{N},\left\{c_i \in \mathbb{R}\right\}_{i=1}^n$ and $\left\{P_i \in \mathcal{P}\right\}_{i=1}^n$, such that $r=c_0+\sum_{i=1}^n c_i P_i$. So we obtain that
    \begin{equation}
    \label{eq:28}
    \begin{aligned}
        \left|\eta_{T_1}(\hat\pi)-\eta_{T_1}(\tilde\pi)\right| &\leqslant \frac{1}{1-\gamma}\bigg|\mathbb E_{s\sim d^{\hat\pi}_{T_1}}\left[r_{\hat\pi,T_1}(s)\right]-\mathbb E_{s\sim d^{\tilde\pi}_{T_1}}\left[r_{\hat\pi,T_1}(s)\right]\bigg|+\frac{2\lambda}{1-\gamma}\\
& \leqslant \frac{1}{1-\gamma}\left|\sum_{i=1}^n c_i \mathbb{E}_{s\sim d^{\hat\pi}_{T_1}}\left[P_i(s, a)\right]-\sum_{i=1}^n c_i \mathbb{E}_{s\sim d^{\tilde\pi}_{T_1}}\left[P_i(s, a)\right]\right|+\frac{2\lambda}{1-\gamma} \\
& \leqslant \frac{1}{1-\gamma} \sum_{i=1}^n\left|c_i\right|\left|\mathbb{E}_{s\sim d^{\hat\pi}_{T_1}}\left[P_i(s, a)\right]-\mathbb{E}_{s\sim d^{\tilde\pi}_{T_1}}\left[P_i(s, a)\right]\right|+\frac{2\lambda}{1-\gamma} \\
& \leqslant \frac{1}{1-\gamma}\left(\sum_{i=1}^n\left|c_i\right|\right) d_{\mathcal{P}}\left(d^{\hat\pi}_{T_1}, d^{\tilde\pi}_{T_1}\right)+\frac{2\lambda}{1-\gamma} \\
& \leqslant \frac{1}{1-\gamma}\|r\|_{\mathcal{P}} d_{\mathcal{P}}\left(d^{\hat\pi}_{T_1}, d^{\tilde\pi}_{T_1}\right)+\frac{2\lambda}{1-\gamma}.
\end{aligned}
    \end{equation}
Combining Eq.~\eqref{eq:26} and Eq.~\eqref{eq:28}, we have 
\begin{equation}
\label{eq:29}
    \eta_{T_1}(\hat\pi)\geqslant \eta_{T_1}(\tilde\pi)-\frac{2\|r\|_{\mathcal P}}{1-\gamma}\left(\hat{\mathcal R}^{(m)}_{d_{T_2}^{+,*}}(\mathcal P)+\hat{\mathcal R}^{(m)}_{d_{T_1}^{\hat\pi}}(\mathcal P)+6\Delta\sqrt{\frac{\log(2/\delta)}{m}}+\frac{\varepsilon}{2}\right)+\frac{2\lambda}{1-\gamma}
\end{equation}
with probability at least $1-\delta$. Combining Eq.~\eqref{eq:29} and Eq.~\eqref{eq:25}, we have
    \begin{equation}
    \eta_{T_1}(\hat{\pi})\geqslant\eta_{T_1}(\pi_{T_1}^*)-\dfrac{2R_s+8\lambda}{1-\gamma}-\frac{2\|r\|_{\mathcal P}}{1-\gamma}\left(\hat{\mathcal R}^{(m)}_{d_{T_2}^{+,*}}(\mathcal P)+\hat{\mathcal R}^{(m)}_{d_{T_1}^{\hat\pi}}(\mathcal P)+6\Delta\sqrt{\frac{\log(2/\delta)}{m}}+\frac{\varepsilon}{2}\right)
    \end{equation}
    with probability at least $1-\delta$. Taking expectation with respect to all $T$ in the HiP-MDP concludes the proof.
\end{proof}

\subsection{Discussions on the Theorems}
\label{append:thm_discuss}
\paragraph{Lipschitz Assumption}
\newcommand{\rows}[5]{#1 & #2 & #3 & #5 \\}
\begin{table}[t]
    \centering
    \caption{Comparison between the Lipschitz coefficient $\lambda$ and the maximum reward $R_{\max}$ in practical environments.}
\begin{tabular}{lcccc} 
\toprule
\rows{Environment}{Action-related Reward}{ $\lambda$}{$R_s$}{ $R_{\max}$}
\midrule 
\rows{CartPole-v0}{0}{0}{}{1.00}
\rows{InvertedPendulum-v2 }{0}{0}{}{1.00}
\rows{Lava World}{0}{0}{}{1.00}
\rows{MetaDrive}{0}{0}{}{$\geqslant 1$}
\rows{Fall-guys Like Game}{0}{0}{}{$\geqslant 1$}
\midrule
\rows{Swimmer-v2 }{ $-0.0001\|a\|_2^2$ }{0.0001}{}{0.36}
\rows{HalfCheetah-v2 }{ $-0.1\|a\|_2^2$ }{0.1}{}{4.80}
\rows{Hopper-v2 }{ $-0.001\|a\|_2^2$ }{0.001}{}{3.80}
\rows{Walker2d-v2 }{ $-0.001\|a\|_2^2$}{0.001}{}{$\geqslant4$}
\rows{Ant-v2 }{ $-0.5\|a\|_2^2$ }{0.5}{}{6.00}
\rows{Humanoid-v2 }{ $-0.1\|a\|_2^2$ }{0.1}{}{$\geqslant8$}
\bottomrule
\end{tabular}
    \label{tab:append_lips}
\end{table}
The Lipschitz assumption in Sec.~\ref{sec:pre} requires that if $s$ and $s^{\prime}$ keep unchanged, the deviation of the reward $r$ will not be larger than $\lambda$ times the deviation of the action $a$:
\begin{equation}
    |r(s,a_1,s')-r(s,a_2,s')|\leqslant\lambda\|a_1-a_2\|_1.
\end{equation}
Therefore, the Lipschitz coefficient $\lambda$ is only depends action-related terms in the reward function. 
In Tab.~\ref{tab:append_lips}, we list the action-related terms of the reward functions for various RL evaluation environments, along with the corresponding values of $\lambda$ derived from these terms. As indicated in the table, the action-related terms in reward functions exhibit reasonably small coefficients in all environments compared with the maximum environment reward $R_{\max}$. Therefore, the Lipschitz coefficient $\lambda$ will not dominate the error term in Thm.~\ref{thm:main_1} and Thm.~\ref{thm:finite}.

\paragraph{Failure Cases}
Apart from the Lipschitz assumptions that can easily be realized, Thm.~\ref{thm:main_1} and Thm.~\ref{thm:finite} depend on the formulation of \mra MDPs. Potential failure cases will therefore include tasks with high $R_s$, so that the performance lower bounds become weak. This will happen if states with lowest rewards exist in the optimal trajectory of some, and not all dynamics. In the example of lava world in Fig.~\ref{fig:motivation_lava}, if a reward of -100 is assigned to grid (3,4), $R_s$ will be as large as 100, leading to a weak performance lower bound when the bottom lava block is at Row 2 with a best episode return of 1. Nevertheless, issues with theoretical analyses will not negatively influence the practical performance of \algoe.

\subsection{Comparisons with Previous Approaches}
\label{append:comparison}
Intuitively, the practical algorithm procedure of \algo share some insights with some offline RL algorithms including AWAC~\citep{nachum2019dualdice}, CQL~\citep{kumar2020conservative} and MOPO~\citep{yu2020mopo}. For example, \algo prefers states with high values similar to AWAC and states with high visitation counts similar to CQL and MOPO. The advances of \algo include: 1) By restricting the considered states to accessible states, \algo can be applied in offline datasets collected under dynamics shift, where the aforementioned offline RL algorithms can only learn from the dataset with static dynamics. Thm.~\ref{thm:main_1} and Thm.~\ref{thm:finite} demonstrate the effectiveness of such procedure. 2) \algo modifies the original policy optimization process by reward augmentation. This enables the easy combination of \algo with other cross-domain algorithms to enhance their performance.

\algo also share the approach of classifier-based reward augmentation with DARA~\citep{liu2022dara} and SRPO~\citep{xue2023state}. The classifier input in DARA is $(s,a,s')$ from the source and target environments. Compared with \algoe, the classifier in DARA exhibits higher complexity and is harder to train. It also requires the access to the information of target environments. Therefore, DARA has poor performance as demonstrated in Sec.~\ref{sec:exp_offline} and cannot be applied to tasks with no prior knowledge on the target environments. The algorithm and theories of SRPO are based on the assumption of the same state accessibility, which is an over-simplification of some environments, as demonstrated in Sec.~\ref{sec:motivating} and Sec.~\ref{appendix:distinct}. Comparative results in Sec.~\ref{sec:exp} demonstrate the inferior performance of SRPO compared with \algoe, in correspondence with the flaw in the assumption. Also, the theoretical analysis in the SRPO paper is built on the assumption called ``homomorphous MDPs'' which is stronger than the \mra MDPs used in this paper and is a special case of the latter.

\section{Examples of Distinct State Distributions}
\label{appendix:distinct}
\begin{figure}[t]
    \centering
    \includegraphics[width=0.99\textwidth]{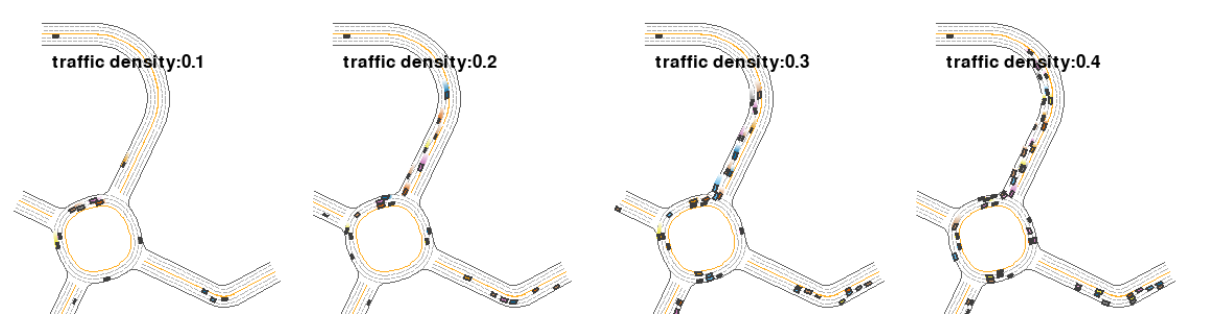}
    \caption{MetaDrive environments with different traffic densities.}
    \label{fig:append_metadrive_demo}
\end{figure}
We claim in the main paper that previous assumption of similar state distributions under distribution shift will not hold in many scenarios. Apart from the motivating example of lava world in Sec.~\ref{sec:motivating}, we demonstrate more examples in the MetaDrive~\citep{li2023metadrive} and the fall-guys like game environment. Examples of MetaDrive environments with different traffic densities are shown in Fig.~\ref{fig:append_metadrive_demo}. The dynamics shift lies in that the ego vehicle will have different probabilities to detect other vehicles nearby. In environments with low traffic densities, there is enough space for some vehicles with optimal policies to drive in high speeds. But in environments packed with surrounding vehicles, fast driving will surely lead to collisions. So the vehicles can only drive in low speeds. As the vehicle speed is included in the agent's state space, difference in traffic densities will lead to distinct optimal state distributions.

Visualizations of the fall-guys like game environment used in Sec.~\ref{sec:exp_koala} are shown in Fig.~\ref{fig:more_demo_fall_guys}, where map components including the conveyor belt speed, the balloon reaction, the floor reaction, and the hammer distance will work together, giving rise to dynamics shift. Taking the variation of hammer distance (Fig.~\ref{fig:more_demo_fall_guys} (d)) as an example, in the left environment the optimal trajectory will contain states where the hammer is near the agent. But in the right environment, there are trajectories that keep the hammer far away to avoid being hit out of the playground. Blindly imitating optimal states collected in the left environment will lead to suboptimal performance in the right environment.

\section{Experiment Details}
\label{append:exp}
\subsection{Baseline Algorithms}
\label{append:baseline}
In experiments with four different tasks, we compare \algo with the following baseline algorithms:
\begin{itemize}
    \item PPO~\citep{schulman2017proximal}: The widely used, off-the-shelf online RL algorithm with on-policy policy update.
    \item SAC~\citep{haarnoja2018soft}: The widely used off-policy RL algorithm with entropy maximization for better exploration.
    \item BCO~\citep{faraz2018behavioral}: Learn a agent-specific inverse dynamics model to infer the experts' missing action information.
    \item GAIfO~\citep{faraz2018generative}: A state-only version of the GAIL algorithm.
    \item GARAT~\citep{desai2020imitation}: Use the action transformer trained with GAIL-like imitation learning to recover the experts' next states in the original environment.
    \item SOIL~\citep{gangwni2020state}: An algorithm combining state-only imitation learning with policy gradients. The overall gradient consists of a policy gradient term and an auxiliary imitation term.
    \item CQL~\citep{kumar2020conservative}: The widely used offline RL algorithm with conservative Q-learning.
    \item MOPO~\citep{yu2020mopo}: A model-based offline RL algorithm subtracting disagreements in next-state prediction from environment rewards.
    \item MAPLE~\citep{chen2021offline}: The offline RL algorithm based on MOPO with an additional context encoder module for cross-dynamics policy adaptation.
    \item OSI~\citep{yu2017preparing}: An algorithm using context encoders for online system identification.
    \item CaDM~\citep{lee2020context}: The online RL algorithm with context encoders for cross-dynamics policy adaptation.
    \item DARA~\citep{liu2022dara}: Make reward augmentations with importance weights between source and target dynamics.
    \item SRPO~\citep{xue2023state}: Make reward augmentations with the assumption of similar optimal state distributions under dynamics shift.
\end{itemize}

\begin{figure*}[t]
    \centering
    \includegraphics[width=0.95\textwidth]{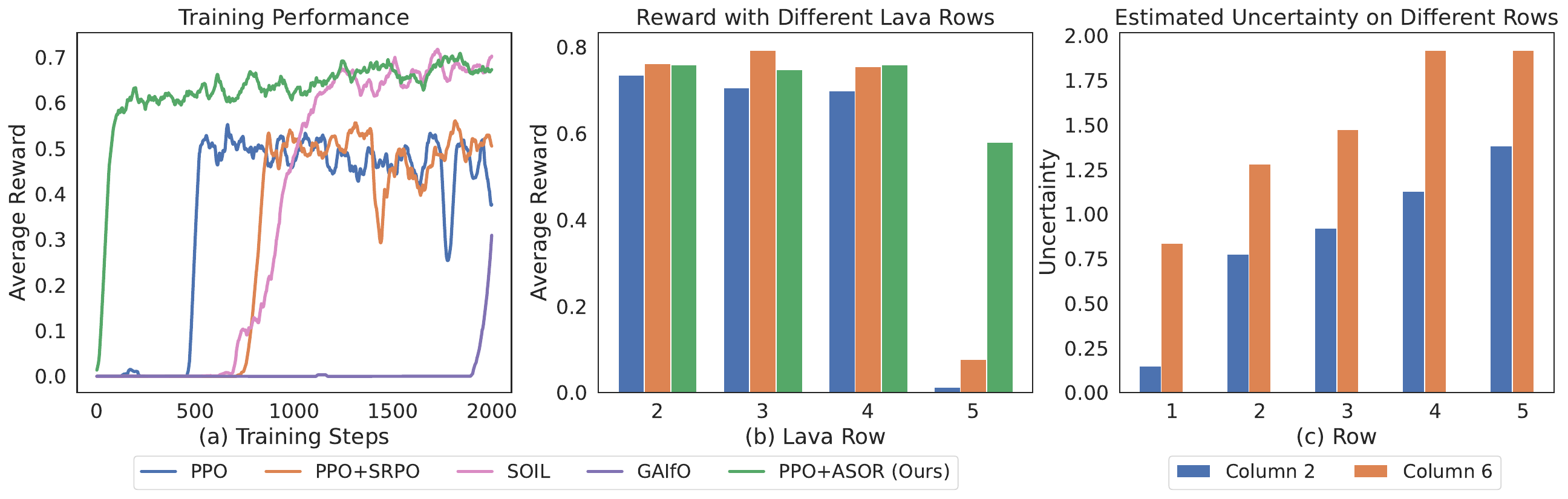}
    \vspace{-0.4cm}
    \caption{Results in the Minigrid environment. (a) Performance comparison between PPO+\algo and baseline algorithms. 
    (b) The average reward on the environment with different position of the second lava row. PPO and PPO+SRPO has very low rewards when the bottom lava is on row 5; (c) The state uncertainty estimated by \algo on different rows of the lava environment.}
    \label{fig:gridworld_res}
\end{figure*}
\subsection{Results in Minigrid Environment}
\label{append:minigrid}
For experiments in the Minigrid environment~\citep{MinigridMiniworld23}, 
the row number of the bottom lava is randomly sampled from $\{2,3,4,5\}$, leading to dynamics shift. 
By including lava indicator as part of the state input, the policy is fully aware of environment dynamics changes and the need of context encoders~\citep{luo2022adapt,lee2020context} is excluded. The categorical action space includes moving towards four directions. The reward function for each environment step is -0.02 and reaching the green goal grid will lead to an additional reward of 1. The episode terminates when the red lava or the green goal grid is reached. For baseline algorithms we select online RL algorithms PPO~\citep{schulman2017proximal} and PPO+SRPO~\citep{xue2023state}, as well as  IfO algorithms SOIL~\citep{gangwni2020state} and GAIfO~\citep{faraz2018generative}.

We demonstrate the experiment results in Fig.~\ref{fig:gridworld_res} (a). Our \algo algorithm can increase the performance of PPO by a large margin, while SRPO can only make little improvement. This is because the optimal state distribution in different lava world environments will not be the same. SRPO will still blindly consider all relevant states for policy regularization, leading to suboptimal policies. Fig.~\ref{fig:gridworld_res} (b) demonstrates the average reward with each possible position of the bottom lava block. PPO and PPO+SRPO have low performance when the bottom lava block is at Row 5. They mistakenly regard grids at (5,4) and (5,5) as optimal, but ASOR will recognize these grids as non-accessible states. We also demonstrate in Fig.~\ref{fig:gridworld_res} (c) the disagreement in next state predictions used to compute pseudo count. States far from the starting point have higher prediction disagreements and lower pseudo counts.

\subsection{Additional Setup of the Fall-guys Like Game Environment}
\label{append:fall-guys}
\paragraph{Additional Environment Demonstrations}
\begin{figure}[t]
    \centering
    \includegraphics[width=1.0\textwidth]{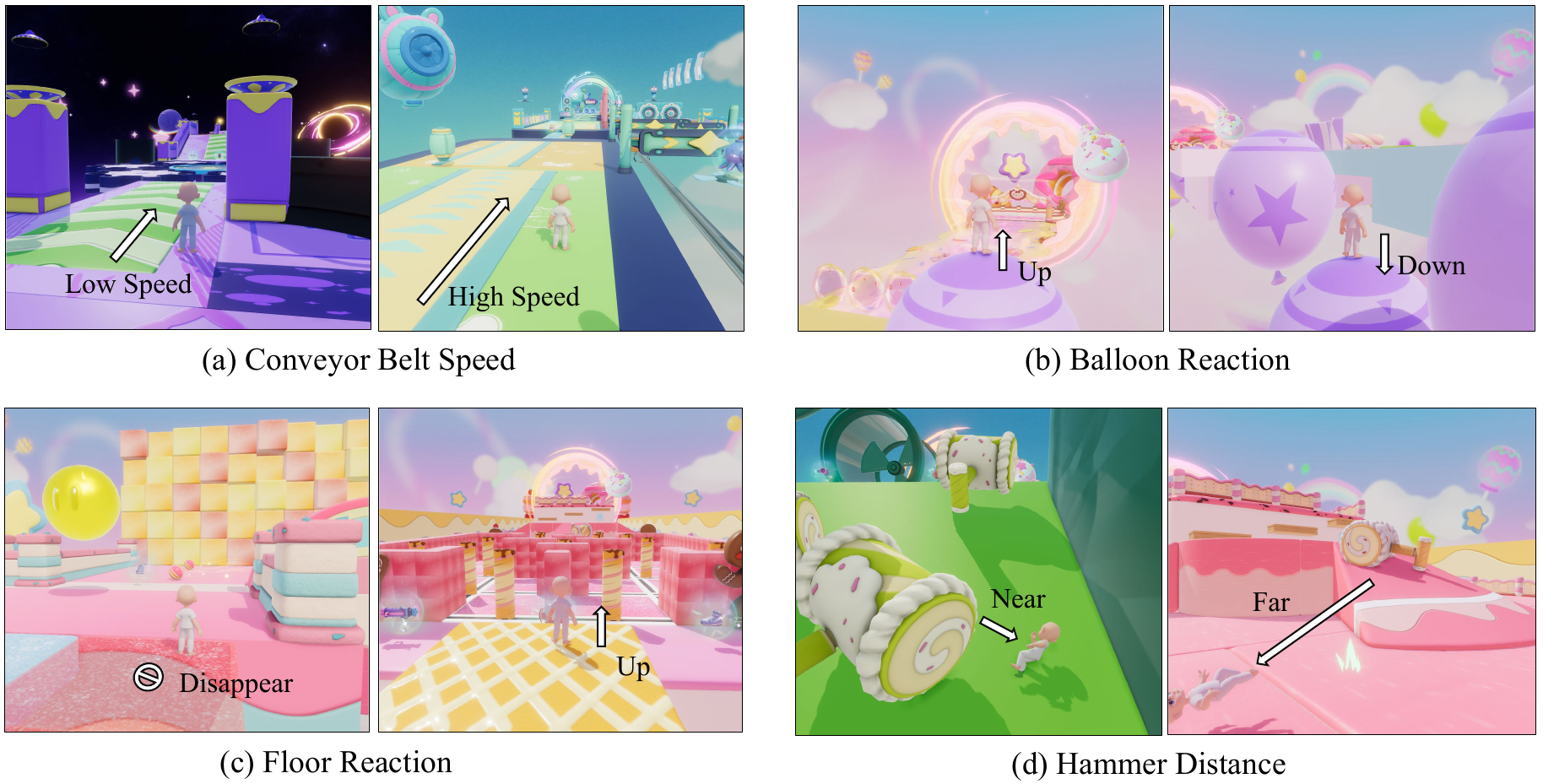}
    \caption{More demonstrations of dynamics shifts in fall guys-like game. Colors and textures are only for visual enhancement and are not part of the agent's observations.}
    \label{fig:more_demo_fall_guys}
\end{figure}
Next, we present additional examples of dynamic shifts within the fall-guys like game environment to demonstrate the diverse and variable nature of in-game elements. As shown in Fig.~\ref{fig:more_demo_fall_guys}, the game environment features a range of dynamic shifts which contribute to the complexity and unpredictability of the gameplay. Specifically, we observe the following scenarios: \textbf{Fig.~\ref{fig:more_demo_fall_guys} (a)}: The speed of conveyor belts changes across different game settings, leading to varied transitions in the agent's position and momentum when it steps onto these belts. \textbf{Fig.~\ref{fig:more_demo_fall_guys} (b)}: Balloons exhibit different reactions upon interaction with the agent. This variation can significantly affect the agent's subsequent trajectory. \textbf{Fig.~\ref{fig:more_demo_fall_guys} (c)}: The behavior of floors under the agent's influence varies significantly. Some floors may collapse, disappear, or shift unexpectedly, introducing further complexity to the environment. \textbf{Fig.~\ref{fig:more_demo_fall_guys} (d)}: The distance and direction in which the agent is ejected when struck by hammers can vary widely. This variability depends on the unpredictable environmental dynamic shifts, for example, the force and angle of the hammer's swing.

\paragraph{MDP Setup}
Below, we provide definitions of state space, action space, and rewards in the fall-guys like game environment.

\textbf{State space $\mathcal{S}$}: 
\begin{itemize}
    \item \textbf{Terrain Map} (dim=$(16\times16\times2)$ with granularities of $[1.0, 2.0]$): The relative terrain waypoints in the agent's surrounding area. Various granularities capture different details and perceptual ranges effectively.
    \item \textbf{Item Map} (dim=$(16\times16\times1)$ with granularities of $[1.0, 2.0]$): Map of nearby items or objects. Multiple maps focus on different item types, with granularities for varying spatial scales.
    \item \textbf{Target Map} (dim=$(16\times16\times1)$ with granularities of $[1.0, 4.0, 16.0]$): Map of archive points locations. Various granularities capture different details and perceptual ranges at different spatial scales.
    \item \textbf{Goal Map} (dim=$(16\times16\times1)$ with granularities of $[1.0, 4.0, 16.0]$): Map of intermediate goal locations. Various granularities capture different details and perceptual ranges at various spatial scales.
    \item \textbf{Agent Info} (dim=$32$): Details about agent's own state, including position, rotation, velocity, animation state, and forward direction.
    \item \textbf{Destination Info} (dim=$9$): Details about the destination, including position, rotation, and size of the destination, providing crucial details for navigation and goal achievement.
\end{itemize}

\textbf{Action space $\mathcal{A}$}: 
\begin{itemize}
    \item \textbf{MoveX} (dim=$3$): Move along the X-axis. The three discrete options typically represent movement in the positive X direction, negative X direction, or no movement.
    \item \textbf{MoveY} (dim=$3$): Move along the Y-axis, with three discrete options for movement in the positive Y direction, negative Y direction, or no movement.
    \item \textbf{Jump} (dim=$2$): Represents jump behavior. Options are to initiate a jump or not.
    \item \textbf{Sprint} (dim=$2$): Represents sprint behavior. Options are to start sprinting or not.
    \item \textbf{Attack} (dim=$2$): Executes an attack. The two discrete options are to initiate an attack or not.
    \item \textbf{UseProp} (dim=$2$): Utilizes a prop. The two discrete options indicate whether the prop is used or not.
    \item \textbf{UsePropDir} (dim=$8$): Determines the direction for prop usage. The eight discrete options offer various directional choices for prop utilization.
    \item \textbf{Idle} (dim=$2$): Represents idle behavior. Options are to remain idle or not.
\end{itemize}

\textbf{Reward $r$}: 
\begin{itemize}
    \item \textbf{Arrive Target} (value=$1.0$): Rewards the agent for successfully reaching the archive point, with a positive reward of 1.0 upon achievement.
    \item \textbf{Arrive Goal} (value=$0.3$): Rewards the agent for reaching intermediate goal locations within the environment, with a positive reward of 0.3.
    \item \textbf{Arrive Destination} (value=$1.0$): Rewards the agent for reaching the final destination or endpoint within the environment, motivating task completion.
    \item \textbf{Goal Distance} (decay rate=$0.05$): Offers distance-based rewards, varying based on proximity to specific goal locations. Rewards diminish as the agent moves away from the goal, with distinct values for different distance ranges.
    \item \textbf{Fall or Wall} (value=$-1.0$): Penalizes the agent for continuously hitting the wall or falling off a cliff with a penalty of -1.0.
    \item \textbf{Stay} (value=$-0.01$): Penalizes the agent for remaining stationary for extended periods, encouraging continuous exploration and movement.
    \item \textbf{Time} (value=$-0.02$): Penalizes each time step, encouraging efficient decision-making and timely task completion.
\end{itemize}

\paragraph{Network Architecture}
The network architecture is structured as follows: The Terrain Map, Item Map, Target Map, and Goal Map are each fed into a convolutional neural network (CNN) with ReLU non-linearity, followed by a fully connected network (FCN). This process yields four separate 32-dimensional vector representations for each respective map. The Destination Info and Agent Info are independently input into attention layers, generating 32-dimensional vectors for each. Subsequently, all 32-dimensional vectors (from the CNNs and attention layers) are concatenated into a single feature vector. The concatenated feature vector undergoes processing by a multi-head FCN to yield various output actions. Additionally, the concatenated feature vector is processed by another FCN to produce a value as the value function estimator.

\paragraph{Training Setup}
We utilized the Ray RLlib framework~\citep{liang2018rllib}, configuring 100 training workers and 20 evaluation workers. The batch size was set to 1024, with an initial learning rate of $1 \times 10^{-3}$, which linearly decayed to $3 \times 10^{-4}$ over 250 steps. An entropy regularization coefficient of 0.003 was employed to ensure adequate exploration during training. The training was conducted using NVIDIA TESLA V100 GPUs and takes around 20 hours to train 6M steps.
\end{document}